\documentclass[journal]{IEEEtran}

\usepackage[colorlinks,urlcolor=black,linkcolor=black,citecolor=black]{hyperref}

\makeatletter
\renewcommand{\p@subsection}{\thesection-}
\renewcommand{\p@subsubsection}{\thesection-\thesubsection}
\makeatother

\usepackage{graphicx}
\usepackage{cite}

\usepackage{amsmath,amssymb,amsfonts,bm}
\usepackage{mathtools}
\usepackage{amsthm}

\usepackage[table,x11names]{xcolor}
\usepackage{booktabs}
\usepackage{array}
\usepackage{multirow}
\usepackage{makecell}
\usepackage{tabularx}
\usepackage{siunitx}
\usepackage{bbm}
\usepackage{algorithm}
\usepackage{algorithmic}
\usepackage{placeins}
\usepackage{caption}    % Needed for captioning images
\usepackage{subcaption} %subfigure
\usepackage[acronyms]{glossaries}

\usepackage{tikz}
\usepackage{pgfplots}
\usepackage{pgfplotstable}
\pgfplotsset{compat=1.18}

\usetikzlibrary{backgrounds}
\usepgfplotslibrary{fillbetween}
\newcounter{todocounter}

\NewDocumentCommand{\todo}{s m}{%
  \IfBooleanTF{#1}
    {\textcolor{green}{To do: #2}}%
    {\refstepcounter{todocounter}\textcolor{green}{To do \thetodocounter: #2 \newline}}%
}

\newcounter{examinecounter}

\NewDocumentCommand{\examine}{s m}{%
  \IfBooleanTF{#1}
    {\textcolor{green}{Examine: #2}}%
    {\refstepcounter{examinecounter}\textcolor{blue}{Examine \theexaminecounter: #2 \newline}}%
}

\newcounter{Questionscounter}

\NewDocumentCommand{\ques}{s m}{%
  \IfBooleanTF{#1}
    {\textcolor{green}{Questions: #2}}%
    {\refstepcounter{Questionscounter}\textcolor{red}{Questions \theQuestionscounter: #2 \newline}}%
}

\newcommand{\figref}[1]{\hyperref[#1]{Fig.~\ref*{#1}}}
\newcommand{\tabref}[1]{\hyperref[#1]{Tab.~\ref*{#1}}}
\newcommand{\secref}[1]{\hyperref[#1]{Sec.~\ref*{#1}}}

\newcommand{\appref}[1]{\hyperref[#1]{Appendix~\ref*{#1}}}

\newacronym{sos}{SOS}{sum-of-squares}
\newacronym{clf}{CLF}{control Lyapunov function}
\newacronym{cbf}{CBF}{control barrier function}
\newacronym{qp}{QP}{quadratic program}
\newacronym{zoh}{ZOH}{zero-order-hold}
\newacronym{mpc}{MPC}{model predictive controller}
\newacronym{sdp}{SDP}{semidefinite program}
\newacronym{lmis}{LMIs}{linear matrix inequalities}
\newacronym{psd}{PSD}{positive-semidefinite}
\newacronym{bbs}{JBS}{joint barrier synthesis}
\newacronym{sbs}{SBS}{successive-barrier synthesis}

\newacronym{pwbb}{JBS-PW}
{joint barrier synthesis with piecewise-polynomial dynamics}

\newacronym{ldbb}{JBS-LD}
{joint barrier synthesis with lifted dynamics}

\newacronym{pwsb}{SBS-PW}
{successive-barrier synthesis with piecewise-polynomial dynamics}

\newacronym{ldsb}{SBS-LD}
{successive-barrier synthesis with lifted dynamics}

\newacronym{CT}{CT}{coordinated turn}
\newacronym{PM}{PM}{planar multirotor}

\newacronym{smt}{SMT}{satisfiability modulo theories}

\theoremstyle{plain}
\newtheorem{theorem}{Theorem}[section]
\newtheorem{proposition}{Proposition}[section]

\theoremstyle{definition}
\newtheorem{definition}{Definition}[section]

\theoremstyle{remark}
\newtheorem{remark}{Remark}[section]

\begin{document}

% \sptitle{Article Category}
% \sptitle{Regular Paper}

\title{Barrier Certificate Synthesis for Non-Polynomial Robotic Dynamics via Polynomial Lifting}

% \editor{This paper was recommended by Associate Editor F. A. Author.}

% \author{F. A. AUTHOR\affilmark{1} (Student Member, IEEE)}

% \author{B. AUTHOR\affilmark{2}  (Member, IEEE)}

% \author{C.~AUTHOR\affilmark{2}}

% \affil{National Institute of Standards and Technology, Boulder, CO 80305 USA} 
% \affil{Department of Physics, Colorado State University, Fort Collins, CO 80523 USA} 

% \corresp{CORRESPONDING AUTHOR: F. A. Author (e-mail: \href{mailto:author@boulder.nist.gov}{author@boulder.nist.gov})}
% \authornote{This work was supported by the Canada Research Chair CRC and the National Research and Engineering Council of Canada
% NSERC.}

% \markboth{PREPARATION OF PAPERS FOR IEEE OPEN JOURNAL OF CONTROL SYSTEMS}{F. A. AUTHOR {\itshape ET AL}.}
% \editor{This paper was recommended by Associate Editor F. A. Author.}

% \author{Shivam Chaubey\affilmark{1}  (Student Member, IEEE)}
% \author{Francesco Verdoja\affilmark{1} (Senior Member, IEEE)}
% \author{Shankar Deka\affilmark{1}(Member, IEEE)}
% \author{Ville Kyrki\affilmark{1} (Senior Member, IEEE)}

% \affil{School of Electrical Engineering, Aalto University, Espoo, Finland}

% \corresp{CORRESPONDING AUTHOR: Shivam Chaubey
% (e-mail: \href{mailto:shivam.chaubey@aalto.fi}
% {shivam.chaubey@aalto.fi})}
% \authornote{The authors acknowledge the use of the MIDAS infrastructure of Aalto School of Electrical Engineering.}
\author{
Shivam Chaubey,
Francesco Verdoja,
Shankar Deka,
and Ville Kyrki
\thanks{The authors are with the School of Electrical Engineering, Aalto University, Espoo, Finland, and acknowledge the use of the university's MIDAS infrastructure.}
\thanks{This work has been submitted to the IEEE for possible publication.
Copyright may be transferred without notice, after which this version
may no longer be accessible.}%
}

\maketitle

\begin{abstract}
Safe operation of robotic systems requires trajectories to remain within a
prescribed safe set under admissible control inputs. Barrier certificates
provide such guarantees by certifying a controlled-invariant region within
that set.
Sum-of-squares 
optimization offers a systematic way to synthesize such certificates, 
but its direct application requires polynomial dynamics, excluding 
common robotic nonlinearities, including trigonometric terms. We address 
this limitation using exact polynomial lifting, which replaces non-polynomial dynamics with polynomial-augmented dynamics subject to 
lifting-induced algebraic constraints, preserving nonlinear geometry without approximation.
We formulate lifted-domain joint barrier synthesis that computes 
a certificate with a state-feedback control witness and develop a 
sampled-data safety filter for zero-order-hold implementation. To 
assess whether the benefits of lifting persist across synthesis 
frameworks, we also adapt a
sample-guided successive-barrier 
method to the lifted representation. On coordinated-turn and planar 
multirotor models, exact lifting improves certified coverage in both 
methods: at matched sample sizes, lifted successive-barrier synthesis achieves higher coverage with fewer barriers and 
lower computational cost, while lifted joint barrier synthesis provides higher 
coverage and lower computational cost than the finest tested piecewise 
resolution. In closed-loop experiments, the safety filter 
maintains feasibility and safety across all evaluated
trajectories, reduces spatial conservativeness, and requires less 
intervention for both models.
\end{abstract}

% \begin{IEEEkeywords}
% Enter key words or phrases in alphabetical order, separated by commas. For a list of\goodbreak suggested keywords, send a blank e-mail to \href{mailto:keywords@ieee.org}{keywords@ieee.org} or visit\goodbreak \href{http://www.ieee.org/organizations/pubs/ani_prod/keywrd98.txt}{http://www.ieee.org/organizations/pubs/ani\_prod/keywrd98.txt}
% \end{IEEEkeywords}
\begin{IEEEkeywords}
Formal verification/synthesis, nonlinear systems
and control, sampled-data control, robotics, computational
methods.
% https://ojcsys.github.io/pages/keywords.html
\end{IEEEkeywords}

% https://www.sciencedirect.com/journal/systems-and-control-letters
\section{Introduction}\label{sec:intro}

Ensuring safety in autonomous robotic systems requires
controllers that can formally guarantee constraint satisfaction
at all times. \Glspl{cbf} have emerged as a principled
framework for enforcing forward invariance of safe sets, while
remaining compatible with performance-oriented control
objectives~\cite{ames2016control,8796030}.
\glspl{cbf} have been applied across robotic systems, including
multi-robot collision 
avoidance~\cite{7857061}, bipedal
locomotion~\cite{7172044}, robotic
grasping~\cite{8913716}, and vehicle
safety~\cite{ames2016control}.
They are commonly deployed through a real-time \gls{qp} that
minimally modifies a nominal control input, making them suitable
for safety filtering applications such as shared
autonomy, teleoperation, and supervisory control
~\cite{losey2021learninglatentactionscontrol,10015627}.

A \gls{cbf}-based safety filter requires a certified region that
satisfies the state constraints and is controlled invariant under
admissible inputs. In practice, \glspl{cbf} are often derived in a
task- or system-specific manner; while such constructions can be
rigorously validated, they are typically tailored to a particular
system or safety specification
~\cite{dai2022learning,DuanControl}.
Systematically synthesizing such certificates while accounting for
the system dynamics, state constraints, and admissible inputs remains
computationally challenging~\cite{9683520,10015199}.

For systems with polynomial dynamics, \gls{sos} programming
offers a systematic approach: the barrier-certificate
conditions are encoded as polynomial nonnegativity constraints
and solved through semidefinite programming
\cite{prajna_hybrid,4287147,8431617,
10176323,toulkani2025minimally}.
However, direct \gls{sos} formulations require polynomial
system dynamics. This limits their direct application to many
robotic systems, where trigonometric nonlinearities arising
from rotational kinematics and dynamics are ubiquitous.
Moreover, rotational states evolve on non-Euclidean
configuration spaces, so representing them directly in
Euclidean coordinates can introduce artificial coordinate
boundaries and periodicity issues.
Such dynamics can be made amenable to \gls{sos} synthesis
through certified polynomial approximation
\cite{YANG2020100837,Wajid_successive}, or through
exact polynomial recasting using auxiliary states and
lifting-induced algebraic constraints
\cite{Papachristodoulou2005}. 
These representations introduce
different structural trade-offs: piecewise-polynomial
approximation retains the original state dimension but
requires certification across local regions and
approximation-error uncertainty, whereas exact lifting
avoids approximation-error uncertainty at the cost of
additional polynomial states and algebraic constraints.

A second challenge arises when a certificate synthesized for
continuous-time dynamics is deployed through a sampled-data
\gls{cbf}-\gls{qp} safety filter. 
% Many \gls{cbf} formulations prescribe a positive barrier rate during
% synthesis, which can restrict the synthesized certificate.
Many \gls{cbf} formulations prescribe a positive barrier rate during
synthesis, which can make the forward-invariance condition more
restrictive.
Separately, the runtime barrier rate may also be adjusted
online to improve filter feasibility or reduce conservativeness
~\cite{zeng2021optimal,garg2024advances}.
% Many \gls{cbf} formulations
% prescribe a positive barrier rate during synthesis, which can
% restrict the certificate, while the runtime rate also affects
% filter feasibility and conservativeness
% ~\cite{zeng2021optimal,garg2024advances}.
% In boundary-based synthesis~\cite{Wang2023}, invariance is
% imposed only on \(B(\mathbf{x})=0\), following Nagumo's
% boundary condition~\cite{Nagumo1942berDL}.
Boundary-based synthesis~\cite{Wang2023} avoids 
% prescribing such
% a rate 
prescribing a barrier rate during synthesis
by imposing invariance only on \(B(\mathbf{x})=0\), following
Nagumo's boundary condition~\cite{Nagumo1942berDL}.
This boundary-only condition allows a
state-dependent polynomial multiplier used in the forward-invariance condition to remain unrestricted in sign
during synthesis.
This provides greater synthesis freedom than
prescribing a fixed positive barrier rate, but
directly using this multiplier as the runtime barrier rate can be unnecessarily restrictive at states where it is negative.
% the resulting state-dependent multiplier does not directly provide a suitable
% fixed rate for online filtering and can be unnecessarily
% restrictive when it becomes negative.
Moreover, enforcing the
continuous-time barrier condition only at sampling instants
does not guarantee safety between samples under \gls{zoh}
control~\cite{9417092,9993226,
liu2025samplingawarecontrolbarrierfunctions}. Sampled-data
deployment therefore requires both a synthesis-compatible
runtime barrier rate and an inter-sample safety condition.

Two gaps therefore remain: (i) the effect of exact lifting versus
certified piecewise-polynomial representations under matched
barrier-synthesis methods, and (ii) a fixed-rate bridge from
boundary-based synthesis to inter-sample-safe \gls{zoh} deployment.
We examine the first under \gls{bbs}~\cite{Wang2023}, which jointly
synthesizes a barrier and an admissible state-feedback control witness, and
\gls{sbs}~\cite{Wajid_successive}, which uses samples to
successively construct barriers for fixed admissible controls.
For the second, we
construct a safety filter from the \gls{bbs} certificate.
% we
% address the second for \gls{bbs}.

The main contributions of this work are as follows:
\begin{enumerate}
\item We formulate lifted-domain \gls{sos} barrier-certificate synthesis for control-affine robotic systems by exactly representing their non-polynomial dynamics as polynomial dynamics in a lifted state space and enforcing the associated algebraic constraints.

\item Using this lifted-domain formulation, we develop~\gls{ldbb} and adapt~\gls{sbs} to obtain \gls{ldsb}. We compare them with their matched certified piecewise-polynomial counterparts, \gls{pwbb} and \gls{pwsb}, respectively, to evaluate the effect of the dynamics representation across both synthesis methods.

\item For \gls{bbs}, we derive a fixed runtime barrier-rate
interval: an \gls{sos}-based upper bound on the free synthesis
multiplier gives the lower rate bound, while
\(\gamma T_s\le 1\) gives the upper bound. We then incorporate
the local second-order inter-sample \gls{cbf} margin
of~\cite{9417092} into the sampled-data
\gls{cbf}-\gls{qp} safety filter.
\end{enumerate}

The remainder of this paper reviews related work and
preliminaries in~\secref{sec:related_work}--\ref{sec:problem},
presents the~\gls{bbs}, sampled-data safety-filter, and~\gls{sbs} formulations in~\secref{sec:bilinear_method}--\ref{sec:successive_barrier_synthesis},
and reports the evaluation, discussion, and conclusions in~\secref{sec:experiments}--\ref{sec:conclusion}.
Additional derivations and implementation details are provided
in the supplementary material~\footnote{\url{https://version.aalto.fi/gitlab/irobotics/barrier_certificate_synthesis_for_non_polynomial_robotic_dynamics.git}} and Appendix.
\section{Related Work}\label{sec:related_work}

For polynomial systems, \gls{sos}-based barrier-certificate
synthesis under state and input constraints has been extensively
studied~\cite{8431617,10176323,toulkani2025minimally,Wang2023}.
For non-polynomial dynamics, one approach approximates the
system dynamics and state constraints by polynomial functions
and performs \gls{sos}-based synthesis on the resulting
polynomial model~\cite{DuanControl}; the
approximation error is not explicitly included in the
certificate conditions. A different approach partitions the
operating domain and fits local polynomial models on the
resulting subdomains, enabling \gls{sos}-based analysis of the
identified piecewise-polynomial model~\cite{Cunis_aircraft}.
For guarantees with respect to the original non-polynomial
dynamics, the approximation error can instead be enclosed within
certified bounds, yielding an uncertain polynomial
model~\cite{YANG2020100837,Wajid_successive}.
The barrier
conditions must then hold for all admissible approximation
errors and, for piecewise models, in every local region, increasing the computational burden.
For periodic states represented in Euclidean coordinates,
equivalent boundary points must also be identified, which can be
enforced as a deterministic reset condition analogous to hybrid
barrier-certificate formulations~\cite{prajna_hybrid}.

Existing \gls{sos}-based synthesis methods typically represent
the certified controlled-invariant region using a single
polynomial barrier, which may be iteratively enlarged or
refined~\cite{8431617,10176323,toulkani2025minimally,Wang2023,
DuanControl}. In contrast, the sample-guided method
of~\cite{Wajid_successive} successively synthesizes
multiple barriers for fixed admissible controls, using a convex
\gls{sos} subproblem at each iteration to cover additional
states and enlarge the certified region.

A separate line of work uses neural barrier certificates,
which provide flexible nonlinear certificate parameterizations
~\cite{Peruffo2021NeuralBarrier,Zhao2023NeuralBarrier,
Edwards2024Fossil}. However, formal safety guarantees
for neural certificates require verification over the
continuous state domain, whose scalability remains challenging. For example, FOSSIL~2.0~\cite{Edwards2024Fossil}, which jointly searches for
certificate and controller candidates, is reported in~\cite{Wajid_successive} not to terminate within the
\(10^{4}\,\mathrm{s}\) timeout on either the \gls{CT} or
\gls{PM} benchmark.

Building on the classical polynomial recasting approach
of~\cite{Papachristodoulou2005}, originally developed for
Lyapunov analysis, algebraic lifting of trigonometric terms has
more recently been used in \gls{sos}-based \gls{cbf}
verification and synthesis with input constraints
~\cite{10384199,10885943,10752383}.
In particular, \cite{10885943} prescribes a positive
exponential \gls{cbf} rate and deploys the synthesized
certificates in a \gls{clf}-\gls{cbf}-\gls{qp} controller.
However, for non-polynomial robotic dynamics, the effect of choosing an exact lifted representation rather than a certified piecewise-polynomial representation on barrier synthesis remains unclear.

An alternative is to synthesize the barrier directly in
discrete time, rather than adapting a continuous-time
certificate for sampled-data control. In continuous-time
\gls{bbs}, fixing the barrier makes the invariance condition
affine in the control-witness coefficients~\cite{Wang2023}.
In discrete time, however, the control witness determines the
successor state, so evaluating the barrier at that state
generally introduces nonlinear dependence on these
coefficients. For polynomial systems,
\cite{shakhesi2025synthesisdiscretetimecontrolbarrier}
addresses this using auxiliary variables and additional
\gls{sos} conditions, while
\cite{fochesato2025synthesissafetycertificatesdiscretetime}
obtains convex joint synthesis for uncertain linear systems
under quadratic-barrier and linear-feedback restrictions.
General discrete-time barrier synthesis for nonlinear systems
remains challenging; moreover, discretizing lifted dynamics
need not preserve the lifting-induced algebraic constraints.
\section{Preliminaries and Problem Formulation}\label{sec:problem}

\subsection*{Notation}

All vectors are column vectors. The ring of real polynomials in
$x$ is denoted by $\mathbb{R}[x]$, and the cone of
\gls{sos} polynomials in $x$ by $\Sigma[x]$. The same
convention is used for other indeterminates and variable tuples,
such as $\mathbb{R}[z]$. %and $\mathbb{R}[x,u]$.
Moreover, $\mathbb{R}[x]^r$ denotes the set of
$r$-dimensional polynomial vectors in $x$, while
$\Sigma[x]^r$ denotes the set of $r$-dimensional vectors
whose entries are \gls{sos} polynomials.

For a set $\mathcal{A}\subseteq\mathbb{R}^n$,
$\partial\mathcal{A}$ and
$\operatorname{int}(\mathcal{A})$ denote its boundary and
interior, respectively. For a differentiable scalar function
$p\colon\mathbb{R}^n\to\mathbb{R}$, $\nabla p(x)$ denotes
its gradient; when $p$ is twice differentiable,
$\nabla^2p(x)$ denotes its Hessian.

For a scalar function $p$ and a set $\mathcal{A}$, the
notation $p \ge_{\mathcal{A}} 0$
means that $p(x)\ge 0$ for all $x\in\mathcal{A}$.
Inequalities between vectors are interpreted entry-wise.

\subsection{System Model and Safety Constraints}
\label{sec:system_model}

We consider the continuous-time nonlinear control-affine system
\begin{equation}
\label{eq:dynamics}
    % \Psi\colon\quad
    % \dot{x}=f(x)+g(x)u,
    \dot{x}=f(x)+g(x)u,
\end{equation}
where $x\in\mathbb{R}^{n_x}$ is the state and
$u\in\mathbb{R}^{m}$ is the control input. The maps
$f\colon\mathbb{R}^{n_x}\to\mathbb{R}^{n_x}$ and
$g\colon\mathbb{R}^{n_x}\to\mathbb{R}^{n_x\times m}$
are continuously differentiable on an open set containing the
domain of interest and may contain non-polynomial terms.

% The domain of interest is a compact basic
% semi-algebraic set
% $\mathcal{D}\subset\mathbb{R}^{n_x}$.
A basic semi-algebraic set is described by finitely many
polynomial inequality and equality constraints. The domain of
interest is a compact basic semi-algebraic set
\(\mathcal{D}\subset\mathbb{R}^{n_x}\).
Within this domain,
the state is required to remain in the safe set
\begin{equation}
\label{eq:safe_set}
    \mathcal{S}
    :=
    \left\{
        x\in\mathcal{D}
        \;\middle|\;
        s(x)\geq 0,\;
        h(x)=0
    \right\},
\end{equation}
where $s\in\mathbb{R}[x]^{n_s}$ and
$h\in\mathbb{R}[x]^{n_h}$ are vectors of polynomial
inequality and equality constraints, respectively. The
algebraic relations $h(x)=0$ are assumed to be preserved by
the dynamics~\eqref{eq:dynamics}.

The control input is constrained to the nonempty compact
polytope
\begin{equation}
\label{eq:input_set}
    \mathcal{U}
    :=
    \left\{
        u\in\mathbb{R}^{m}
        \;\middle|\;
        F u\leq e
    \right\},
\end{equation}
where $F\in\mathbb{R}^{r\times m}$ and
$e\in\mathbb{R}^{r}$.

\subsection{Piecewise-Polynomial Approximation}
\label{sec:polynomial_representations}

To obtain a polynomial representation in the original state
coordinates, the domain $\mathcal{D}$ is partitioned into
finitely many basic semi-algebraic regions
$\mathcal{I}_q$, $q=1,\ldots,Q$. On each region, the
non-polynomial terms in $f$ and $g$ are replaced by
polynomial approximations together with bounded approximation
residuals. On each region $\mathcal{I}_q$, the resulting
polynomial differential inclusion is written as
\begin{equation}
\label{eq:pw_prelim_dynamics}
\begin{aligned}
    % \tilde{\Psi}_q\colon\quad
    \dot{x}
    % &=
    % \tilde F_q(x,u,\delta) \\
    &=
     f_q(x,\delta)
    +
    g_q(x,\delta)u,
    \qquad
    \delta\in{\Delta}_q ,
\end{aligned}
\end{equation}
for $x\in\mathcal{I}_q$, where
$ f_q\in\mathbb{R}[x,\delta]^{n_x}$ and
$ g_q\in\mathbb{R}[x,\delta]^{n_x\times m}$.
For each fixed $\delta$, the model is affine in $u$, and
for each fixed $u$, it is affine in $\delta$. The compact
set ${\Delta}_q$ is selected such that

\begin{equation}
\label{eq:pw_model_inclusion}
\begin{aligned}
    f(x)+g(x)u
    &\in
    \bigl\{
        f_q(x,\delta) + g_q(x,\delta)u
        \,\big|\,
        \delta\in {\Delta}_q
    \bigr\},
    \\
    &\hspace{-12mm}
    \forall x\in\mathcal{I}_q,\quad
    \forall u\in\mathcal{U}.
\end{aligned}
\end{equation}

Thus,~\eqref{eq:pw_prelim_dynamics} provides a polynomial
over-approximation of the original dynamics on
$\mathcal{I}_q$.

% \subsection{Control Barrier Certificates}
\subsection{Barrier Certificates for Controlled Invariance}
\label{sec:cbc}

We recall the barrier-certificate conditions for
controlled invariance. Let
$B\colon\mathbb{R}^{n_x}\to\mathbb{R}$ be continuously
differentiable, and define its nonnegative superlevel set over
the domain of interest as
\begin{equation}
\label{eq:candidate_set}
    \mathcal{C}
    :=
    \left\{
        x\in\mathcal{D}
        \;\middle|\;
        B(x)\geq0
    \right\}.
\end{equation}

\begin{definition}[Barrier Certificate]
\label{def:rcis}
The function $B$ together with a state-feedback control witness
$u_B\colon\mathcal{C}\cap\mathcal{S}\to\mathbb{R}^{m}$ is a
\emph{barrier certificate} for
$\mathcal{S}$ on $\mathcal D$ if
the following conditions hold.

\begin{enumerate}%[label=\textnormal{(C\arabic*)}]
    \item
    \label{cond:safety}
    \textbf{Safety:}
    The candidate states satisfying the algebraic state
    constraints are contained in the safe set:
    \begin{equation*}
    % \label{eq:C1}
        \left\{
            x\in\mathcal{C}
            \;\middle|\;
            h(x)=0
        \right\}
        \subseteq
        \mathcal{S}.
    \end{equation*}
    The restriction $h(x)=0$ is included because
    $\mathcal{C}$ is defined over the ambient domain
    $\mathcal{D}$, whereas only states satisfying the algebraic
    constraints are admissible system states.
    \item
    \label{cond:input}
    \textbf{Input feasibility:}
    There exists a state-feedback control witness
    $u_B\colon\mathcal{C}\cap\mathcal{S}\to\mathbb{R}^{m}$
    that satisfies the input constraints throughout the certified
    set:
    \begin{equation*}
    % \label{eq:C2}
        u_B(x)\in\mathcal{U},
        \qquad
        \forall x\in\mathcal{C}\cap\mathcal{S}.
    \end{equation*}
    \item
    \label{cond:invariance}
    \textbf{Barrier-boundary invariance (Nagumo~\cite{Nagumo1942berDL}):}
On \(B=0\) within \(\mathcal S\), the control witness satisfies
\begin{equation*}
    \nabla B(x)^\top
    \bigl(f(x)+g(x)u_B(x)\bigr)\geq0 .
\end{equation*}
\end{enumerate}
\end{definition}

Condition~\ref{cond:safety} ensures that the candidate set does not extend beyond the prescribed safety
constraints. Condition~\ref{cond:input} requires the control
witness to satisfy the control input bound throughout
\(\mathcal{C}\cap\mathcal{S}\).
Condition~\ref{cond:invariance} imposes the Nagumo boundary
condition on \(B=0\) within \(\mathcal S\). Under the usual
regularity assumptions, this preserves \(B(x(t))\geq0\) as
long as \(x(t)\in\mathcal D\)~\cite{Nagumo1942berDL,blanchini1999set}. 

\subsection{SOS Programming and Positivity Certificates}
\label{sec:sos_prelim}

Many constraints in this paper reduce to verifying
polynomial nonnegativity over semi-algebraic sets.
We employ \gls{sos} programming to encode such
conditions as tractable semidefinite programs.

A polynomial $p \in \mathbb{R}[x]$ is an
\gls{sos} polynomial, written
$p \in \Sigma[x]$, if there exist polynomials
$p_1,\dots,p_r \in \mathbb{R}[x]$ such that
$p(x) = \sum_{i=1}^{r} p_i(x)^2$. 
Any \gls{sos} polynomial is globally nonnegative, and
checking whether a polynomial of fixed degree is
\gls{sos} reduces to a semidefinite feasibility
problem~\cite{parrilo2000sos}.
% Any \gls{sos} polynomial
% is globally nonnegative, and tildeing whether a
% polynomial of fixed degree is \gls{sos} reduces to a
% semidefinite program~\cite{parrilo2000sos}.

To certify nonnegativity over a semi-algebraic set
rather than globally, we use the Positivstellensatz.

\begin{theorem}[Putinar's Positivstellensatz
{\cite{putinar1993positive}}]\label{thm:putinar}
Let $\mathcal{K} := \{x \in \mathbb{R}^{n_x} \mid
\kappa(x) \geq 0,\;\; \ell(x) = 0\}$ be a compact
semi-algebraic set defined by
$\kappa \in \mathbb{R}[x]^{n_\kappa}$ and
$\ell \in \mathbb{R}[x]^{n_\ell}$, whose associated
quadratic module is Archimedean. If $p(x) > 0$ for all
$x \in \mathcal{K}$, then there exist
$\sigma_0 \in \Sigma[x]$,
$\sigma \in \Sigma[x]^{n_\kappa}$, and
$\eta \in \mathbb{R}[x]^{n_\ell}$ such that
\begin{equation}\label{eq:putinar}
    p(x) = \sigma_0(x) + \sigma(x)^\top \kappa(x)
    + \eta(x)^\top \ell(x).
\end{equation}
\end{theorem}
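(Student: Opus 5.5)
The plan follows Putinar's classical route through the Kadison--Dubois (Archimedean) representation theorem together with a separation argument. Write $M := \{\sigma_0 + \sigma^\top\kappa + \eta^\top\ell\}$ for the quadratic module generated by the constraints, with $\sigma_0,\sigma_i$ SOS and $\eta_j$ arbitrary polynomials. Since $\ell_j$ and $-\ell_j$ both enter with arbitrary polynomial multipliers, the equalities can be treated as the pair of inequalities $\ell_j\ge 0$, $-\ell_j\ge0$. Archimedean means that $N-\|x\|^2\in M$ for some $N>0$. The goal is then to show that every $p$ with $p>0$ on $\mathcal K$ lies in $M$.

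\textbf{Step 1: $M$ is a cone with a strong order unit.} First show that $1$ is an order unit. The Archimedean property gives $N\pm x_i\in M$ for each $i$, via $N' \pm 2x_i = (x_i\pm1)^2 + (N'-1-x_i^2)$ for suitable $N'$. The set of polynomials $q$ with $c\pm q\in M$ for some $c$ is closed under sums and products; closure under products uses the identity $c_1c_2 - q_1q_2 = \tfrac12[(c_1+q_1)(c_2-q_2)+(c_1-q_1)(c_2+q_2)]$ together with the fact that products of elements of $M$ of this form can be rewritten inside $M$ by a standard trick. This set contains the coordinates, so it is all of $\mathbb{R}[x]$.

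\textbf{Step 2: Separation.} Suppose $p\notin M$. By Zorn's lemma, enlarge $M$ to a maximal proper quadratic module $M^*$ that still excludes $p$ in the appropriate sense, for instance by working with the preordering/semiordering argument. The order-unit property yields, via Eidelheit--Kakutani separation in the finite-topology setting, a linear functional $L\ge0$ on $M$ with $L(1)=1$ and $L(p)\le 0$. Taking $L$ extremal, one shows it is a ring homomorphism $\mathbb{R}[x]\to\mathbb{R}$, i.e.\ evaluation at some point $x^*$. Since $L\ge0$ on $\kappa_i$ and $L(\ell_j)=0$, we get $x^*\in\mathcal K$ with $p(x^*)\le0$, contradicting $p>0$ on $\mathcal K$.

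\textbf{Main obstacle.} The hardest step is showing that the extremal positive functional is multiplicative, equivalently that $M^*\cup -M^*=\mathbb{R}[x]$ and $M^*\cap -M^*$ is a real prime ideal. I would follow Jacobi's / Marshall's proof: use the order unit to show that $M^*\cap-M^*$ is an ideal (this uses $1/2$-invertibility and the identity $4q=(q+1)^2-(q-1)^2$), and then argue that the quotient embeds in $\mathbb{R}$ by the Archimedean property. Since the paper cites this as a known theorem~\cite{putinar1993positive}, this classical argument would be invoked rather than re-derived in full.
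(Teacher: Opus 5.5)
The paper gives no proof of this statement: it quotes the result from \cite{putinar1993positive} and uses it as a black box, so your closing remark matches the paper. Read as an actual proof, though, your sketch has genuine gaps. They occur exactly where Putinar's theorem for quadratic modules is harder than Schm\"udgen's for preorderings: $M$ satisfies $\Sigma[x]\cdot M\subseteq M$ but not $M\cdot M\subseteq M$ (e.g.\ $\kappa_1\kappa_2$ need not lie in $M$).

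In Step~1, your identity $c_1c_2-q_1q_2=\tfrac12\bigl[(c_1+q_1)(c_2-q_2)+(c_1-q_1)(c_2+q_2)\bigr]$ expresses the target through products of two elements of $M$. Such products need not lie in $M$, so the identity proves nothing here, and the unnamed ``standard trick'' is the entire argument. What works is closure under squaring: $n^2-f^2=\tfrac{1}{2n}\bigl[(n+f)^2(n-f)+(n-f)^2(n+f)\bigr]\in M$ whenever $n\pm f\in M$, while $n^2+f^2\in\Sigma[x]$. Then polarize, $fg=\tfrac14\bigl[(f+g)^2-(f-g)^2\bigr]$, inside the linear subspace of bounded elements.

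The more serious gap is Step~2, where you assert rather than show that an extremal state is multiplicative. Choosing $L$ already needs care: minimize $L\mapsto L(p)$ over the compact convex state space and invoke Bauer's maximum principle, since an extreme point of $\{L: L(p)\le0\}$ need not be extreme in the state space. More importantly, the classical argument writes $L$ as a convex combination of $f\mapsto L(gf)/L(g)$ and $f\mapsto L((c-g)f)/L(c-g)$. These are guaranteed to be nonnegative on $M$ only when $gM\subseteq M$ and $(c-g)M\subseteq M$, which is automatic for a preordering but fails for a quadratic module (take $g=\kappa_1$).

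Your ``main obstacle'' paragraph does not close this. That $M^*\cap-M^*$ is an ideal is the easy part: it holds for every quadratic module via $4q=(q+1)^2-(q-1)^2$, with no order unit needed. The substantive steps are not addressed:
\begin{itemize}
\item obtaining $M^*\cup-M^*=\mathbb{R}[x]$ with prime support for a suitable maximal $M^*$;
\item showing that the induced cone on the quotient is closed under products (an Archimedean semiordering is an ordering), which is what lets the quotient embed in $\mathbb{R}$;
\item if $M^*$ is taken maximal proper containing $-p$, upgrading the resulting weak identity $ps=1+m$ to $p\in M$.
\end{itemize}
Moreover, your Zorn step is never connected to the separating functional $L$, so ``equivalently'' splices two different proofs.

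The shortest complete route is Putinar's own, which bypasses extremality. Work on the GNS space of your separating state $L$. From $N-\|x\|^2\in M$ you get $L(x_i^2q^2)\le N\,L(q^2)$, so multiplication by each $x_i$ is a bounded self-adjoint operator, and these operators commute. The spectral theorem then gives a compactly supported probability measure $\mu$ with $L(q)=\int q\,d\mu$. The conditions $L(q^2\kappa_i)\ge0$ and $L(q\ell_j)=0$ for all $q$ force $\operatorname{supp}\mu\subseteq\mathcal K$. Hence $L(p)=\int p\,d\mu\ge\min_{\mathcal K}p>0$, contradicting $L(p)\le0$.
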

Any decomposition of the form~\eqref{eq:putinar} directly
certifies $p\geq_{\mathcal K}0$. Under the assumptions of
Theorem~\ref{thm:putinar}, such a decomposition is guaranteed
to exist for every polynomial strictly positive on
$\mathcal K$, provided no degree bounds are imposed on the
multipliers.

In the synthesis problems considered here, the polynomial
$p$ and the multipliers in~\eqref{eq:putinar} may both
depend on unknown coefficients. The identity is enforced by
matching polynomial coefficients, while the \gls{sos}
constraints are represented through positive-semidefinite Gram
matrices. With fixed degree bounds, the resulting problem is
finite-dimensional and becomes a \gls{sdp} whenever all
decision variables enter affinely. 

\subsection{Problem Formulation}
\label{sec:problem_formulation}

Given the non-polynomial control-affine
system~\eqref{eq:dynamics}, the domain \(\mathcal D\), the safe
set \(\mathcal S\) in~\eqref{eq:safe_set}, the input set
\(\mathcal U\) in~\eqref{eq:input_set}, and a sampling period
\(T_s>0\), the objectives are to:

\begin{enumerate}
\item construct a barrier certificate defining a certified region
\(\mathcal C\subseteq\mathcal S\) that is controlled
invariant for the original non-polynomial dynamics, i.e.,
for every \(\mathbf{x}(t_0)\in\mathcal C\), there exists an
admissible control \(u(t)\in\mathcal U\) such that
\[
    \mathbf{x}(t)\in\mathcal C,\qquad \forall t\ge t_0;
\]

\item given a barrier certificate, construct a
sampled-data safety filter that, whenever feasible at \(t_k\),
prevents the state from leaving \(\mathcal C\) during the
subsequent \gls{zoh} interval:
\[
    \mathbf{x}(t_k)\in\mathcal C
    \;\Longrightarrow\;
    \mathbf{x}(t)\in\mathcal C,
    \qquad
    \forall t\in(t_k,t_{k+1}].
\]
\end{enumerate}
\section{Joint Barrier Synthesis}
\label{sec:bilinear_method}
This section develops \gls{ldbb} on the polynomial 
synthesis framework of~\cite{Wang2023} and uses the same
framework to construct \gls{pwbb} as the certified
piecewise-polynomial baseline.
Following the recasting procedure
of~\cite{Papachristodoulou2005}, the non-polynomial terms are
represented by auxiliary variables whose dynamics are obtained
using the chain rule. Together with the lifting-induced
polynomial equalities and inequalities, the resulting augmented
polynomial dynamics define the lifted model over the lifted-state domain
$\hat{\mathcal D}$. When initialized consistently with the
lifting map, the projection of its trajectories onto the
original state coordinates reproduces the original
continuous-time dynamics.

Using this lifted model, we jointly synthesize a lifted
polynomial barrier and a lifted polynomial control witness by
enforcing the \gls{sos} conditions over
$\hat{\mathcal D}$. Because the resulting bilinear
\gls{sos} problem is nonconvex, we solve it through alternating
optimization, fixing one block of decision variables at a time
so that each subproblem is convex. The synthesized certificate
is subsequently evaluated in the original coordinates through
the lifting map and used to construct the sampled-data safety
filter developed in ~\secref{sec:zoh_safety_filter}.

For comparison, we also formulate \gls{pwbb} as a 
baseline. It uses the same overall certificate structure and alternating
\gls{sos} synthesis framework, but replaces the lifted model with a
piecewise-polynomial representation of the original dynamics with certified
approximation-error bounds. This alignment allows us to compare how the
lifted and piecewise model representations affect the synthesized certificates
and their closed-loop behavior.

% \subsection{Approximated model}
\subsection{Polynomial Lifting}\label{sec:lifting}
 We adopt the lifting (recasting) approach 
of~\cite{Papachristodoulou2005}, originally developed 
for Lyapunov stability analysis, and extend it to the 
barrier certificate synthesis setting.
When the dynamics in~\eqref{eq:dynamics} contain
non-polynomial terms in the drift vector field $f(x)$ or
the input matrix $g(x)$, we introduce auxiliary variables
($\tilde{z}\in\mathbb{R}^{n_{\tilde{z}}}$) to represent these terms. The lifted state is defined as
\begin{equation}\label{eq:augmented_state}
    z := \phi(x)
    =
    \begin{bmatrix}
        x\\
        \psi(x)
    \end{bmatrix}
    =
    \begin{bmatrix}
        x\\
        \tilde{z}
    \end{bmatrix}
    \in \mathbb{R}^{n_z},
    \quad
    \tilde{z}:=\psi(x).
\end{equation}
where $\psi:\mathbb{R}^{n_x}\to\mathbb{R}^{n_{\tilde{z}}}$ is a generally non-polynomial lifting map and $n_z = n_x+n_{\tilde{z}}$. 
The auxiliary dynamics follow from
the chain rule:
\begin{equation}
\label{eq:aux_derivative}
    \dot{\tilde z}
    =
    \frac{\partial \psi}{\partial x}(x)\dot{x}.
\end{equation}
We consider systems for which substituting the original
dynamics into~\eqref{eq:aux_derivative} and repeatedly
replacing the selected non-polynomial expressions by their
corresponding auxiliary variables defined by $\psi$ expresses
all augmented-state derivatives as polynomials in $z$, while
preserving affine dependence on $u$.
This yields the lifted polynomial control-affine model
\begin{equation}\label{eq:lifted_dynamics}
    % \hat{\Psi}\colon\quad
    \dot{z} = \hat{f}(z) + \hat{g}(z)\,u,
\end{equation}
where $\hat f:\mathbb{R}^{n_z}\to\mathbb{R}^{n_z}$ and
$\hat g:\mathbb{R}^{n_z}\to
\mathbb{R}^{n_z\times m}$ are polynomial maps. We focus on this case throughout; rational lifted
representations are treated in~\cite{Papachristodoulou2005}.

\subsubsection{Lifting-Induced Algebraic Set}
\label{sec:lifting_induced}

The auxiliary variables satisfy the defining relation
$\tilde z=\psi(x)$. Since this relation is generally
non-polynomial, it cannot be imposed directly in an \gls{sos}
program. However, it implies polynomial equality and inequality
constraints of the form
\begin{equation}
\label{eq:lifting_relations}
    \hat h(z)=0,
    \qquad
    \hat s(z)\geq 0,
\end{equation}

where
$\hat h\in\mathbb{R}[z]^{n_{\hat h}}$ and
$\hat s\in\mathbb{R}[z]^{n_{\hat s}}$.
For example, the substitutions
$c=\cos\theta$ and $s=\sin\theta$ imply
$c^2+s^2-1=0$. Similarly, $y=e^x$ implies
$y\geq0$, whereas $y=\sqrt{x}$ implies
$y^2-x=0$ together with $y\geq0$.

These constraints define the lifting-induced algebraic set
\begin{equation}
\label{eq:lifting_algebraic_set}
    \hat{\mathcal K}
    :=
    \left\{
        z\in\hat{\mathcal D}
        \;\middle|\;
        \hat s(z)\geq0,\;
        \hat h(z)=0
    \right\},
\end{equation}
where $\hat{\mathcal D}\subset\mathbb{R}^{n_z}$ is a compact
semialgebraic lifted-state domain chosen such that
$\phi(\mathcal D)\subseteq\hat{\mathcal D}$.

Every state of the form $z=\phi(x)$ satisfies the constraints
in~\eqref{eq:lifting_relations} and therefore belongs to
$\hat{\mathcal K}$ in~\eqref{eq:lifting_algebraic_set}. The converse need not hold,
because the available polynomial relations may not completely
characterize the defining relation $\tilde z=\psi(x)$.
Consequently, enforcing the \gls{sos} conditions over
$\hat{\mathcal K}$ is valid for the original system,
but may introduce conservatism if the set contains additional
lifting-inconsistent states.

\begin{remark}[Class of liftable systems]
\label{rem:liftable} 
The recasting procedure applies to a broad class of systems
whose selected non-polynomial expressions admit a finite
rational closure under differentiation, so that their
derivatives can be represented using finitely many original
and auxiliary variables~\cite{Papachristodoulou2005}. This
includes systems containing trigonometric, exponential,
radical, and hyperbolic terms, as well as their nested
compositions. The present formulation considers the subclass
for which the resulting lifted dynamics are polynomial.
% Rational lifted dynamics
% are discussed in Remark~\ref{rem:rational}.
\end{remark}
% \ques{Do you think it's symbolic heavy? I should formulate using less symbols? Because of many method it was hard to separate will simple symbols.}
The following trajectory-consistency property follows directly
from the recasting construction of~\cite{Papachristodoulou2005}.

% \begin{proposition}[Trajectory consistency under the lifting map]
% \label{prop:trajectory_consistency}
% Consider the original system~\eqref{eq:dynamics} and its lifted
% representation~\eqref{eq:lifted_dynamics} under the same input
% $u(\cdot)$. If the lifted system is initialized consistently as
% $$
%     z(0)=\phi(x(0)),
% $$
% then, while the corresponding solutions exist,
% $$
%     z(t)=\phi(x(t)).
% $$
% Consequently, the original-state components of $z(t)$ satisfy
% \eqref{eq:dynamics}, and
% $$
%     z(t)\in\hat{\mathcal K}_{\hat\Psi}.
% $$
% \end{proposition}

% \begin{proof}
% Let $x(t)$ satisfy~\eqref{eq:dynamics} and define
% $z(t):=\phi(x(t))$. The original-state components satisfy
% \eqref{eq:dynamics}, while the auxiliary components satisfy
% \eqref{eq:aux_derivative} by the chain rule. Hence,
% $\phi(x(t))$ satisfies the lifted dynamics with initial
% condition $\phi(x(0))$. Uniqueness of solutions then gives
% $z(t)=\phi(x(t))$.
% \end{proof}

\begin{proposition}[Trajectory consistency under the lifting map]
\label{prop:trajectory_consistency}
Consider the original system~\eqref{eq:dynamics} and the
lifted system~\eqref{eq:lifted_dynamics} under the same input
\(u(\cdot)\). Assume that, for this input, the corresponding
initial-value problems admit unique solutions. Let \(x(t)\)
solve the original system with \(x(0)=x_0\), and let \(z(t)\)
solve the lifted system with the consistent initial condition
\[
    z(0)=\phi(x_0).
\]
Then, on their common interval of existence,
\[
    z(t)=\phi(x(t)).
\]
Consequently, the original-state components of \(z(t)\)
satisfy~\eqref{eq:dynamics}. Moreover, whenever
\(x(t)\in\mathcal D\),
\[
    % z(t)\in\hat{\mathcal K}_{\hat\Psi}.
    z(t)\in\hat{\mathcal K}.
\]
\end{proposition}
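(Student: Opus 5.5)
The plan is to prove the claim by uniqueness: construct the candidate $w(t):=\phi(x(t))$ and show that it solves the same initial-value problem as $z(t)$. Fix the input $u(\cdot)$ and let $x(t)$ be the unique solution of~\eqref{eq:dynamics} with $x(0)=x_0$ on its interval of existence. Because $\psi$ is differentiable (for the trigonometric, exponential and radical terms of Remark~\ref{rem:liftable}, on the open region where these expressions are smooth), $w$ is absolutely continuous. For almost every $t$ its first block satisfies $\dot x=f(x)+g(x)u$. The chain rule~\eqref{eq:aux_derivative} gives the second block as $\frac{\partial\psi}{\partial x}(x)\bigl(f(x)+g(x)u\bigr)$.

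The key step is to rewrite both blocks as $\hat f(w)+\hat g(w)u$. By the standing assumption under which~\eqref{eq:lifted_dynamics} was built, substituting the original dynamics into~\eqref{eq:aux_derivative} and replacing each selected non-polynomial expression by its auxiliary variable gives polynomials in $z$, affine in $u$. The point to make explicit is that these replacements are identities once evaluated on the graph $z=\phi(x)$. Concretely, the construction gives the identity
\[
\begin{bmatrix} f(x)+g(x)u\\ \tfrac{\partial\psi}{\partial x}(x)\bigl(f(x)+g(x)u\bigr)\end{bmatrix}
=\hat f(\phi(x))+\hat g(\phi(x))\,u
\]
for all $x$ and $u$. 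Consequently $\dot w(t)=\hat f(w(t))+\hat g(w(t))u(t)$ with $w(0)=\phi(x_0)$. I expect this step to be the main obstacle, because it depends on stating precisely what "repeatedly replacing the selected expressions" means. I would handle it by stating the displayed identity as the defining property of $(\hat f,\hat g)$, and checking it on the example $c=\cos\theta$, $s=\sin\theta$, where $\dot c=-s\dot\theta$.

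By the assumed uniqueness for the lifted initial-value problem, $z(t)=w(t)=\phi(x(t))$ on the common interval of existence. Reading off the first $n_x$ components then shows that the original-state part of $z(t)$ equals $x(t)$, so it satisfies~\eqref{eq:dynamics}.

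Finally, for the membership claim, suppose $x(t)\in\mathcal D$. Then $z(t)=\phi(x(t))\in\phi(\mathcal D)\subseteq\hat{\mathcal D}$ by the choice of $\hat{\mathcal D}$. The relations~\eqref{eq:lifting_relations} are implied by $\tilde z=\psi(x)$, meaning $\hat h(\phi(x))=0$ and $\hat s(\phi(x))\ge 0$ for every $x$ (as in the examples $c^2+s^2-1=0$ and $y=e^x\ge0$). Hence $z(t)\in\hat{\mathcal K}$ by~\eqref{eq:lifting_algebraic_set}.
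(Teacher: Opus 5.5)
Your proposal is correct and follows essentially the same route as the paper. You show, via the chain rule and the construction of $(\hat f,\hat g)$, that $\phi(x(t))$ solves the lifted initial-value problem with initial value $\phi(x_0)$, and then invoke uniqueness. You read off the retained original-state components and conclude $z(t)\in\hat{\mathcal K}$ from $\phi(\mathcal D)\subseteq\hat{\mathcal D}$ together with the lifting relations holding on the graph of $\phi$. Stating the identity $\hat f(\phi(x))+\hat g(\phi(x))u=(\text{stacked original and chain-rule derivatives})$ as the defining property of the lifted model makes explicit what the paper leaves implicit in ``the construction of the lifted dynamics''; your remarks on absolute continuity are harmless extra detail.
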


\begin{proof}
By the chain rule and the construction of the lifted dynamics,
the curve \(\phi(x(t))\) satisfies
\eqref{eq:lifted_dynamics} under the same input \(u(\cdot)\).
It also has the same initial condition as the lifted solution:
\[
    \phi(x(0))=\phi(x_0)=z(0).
\]
Uniqueness of the lifted initial-value problem therefore gives
\[
    z(t)=\phi(x(t))
\]
on their common interval of existence. Since the lifting map
retains \(x\) as the original-state components of \(z\), these
components coincide with \(x(t)\) and satisfy
\eqref{eq:dynamics}. Finally, by the definition of
\(\hat{\mathcal K}\), every lifted state
\(\phi(x)\) with \(x\in\mathcal D\) satisfies the
lifting-induced constraints. Hence,
\(z(t)\in\hat{\mathcal K}\) whenever
\(x(t)\in\mathcal D\).
\end{proof}

\paragraph{Example: Unicycle Model}

Consider the unicycle kinematics
\begin{equation*}
% \label{eq:unicycle}
    \begin{bmatrix}
        \dot{x}_1\\
        \dot{x}_2\\
        \dot{\theta}
    \end{bmatrix}
    =
    \begin{bmatrix}
        \cos\theta & 0\\
        \sin\theta & 0\\
        0 & 1
    \end{bmatrix}
    \begin{bmatrix}
        v\\
        \omega
    \end{bmatrix},
\end{equation*}
With $c:=\cos\theta$ and $s:=\sin\theta$, the non-polynomial terms are
represented by $\tilde z:=\psi(x)=[c,\;s]^\top$, giving the lifted state
$z=[x_1,\;x_2,\;\theta,\;c,\;s]^\top$.
% with state
% $x=[x_1,\;x_2,\;\theta]^\top$ and input
% $u=[v,\;\omega]^\top$.
% The non-polynomial terms are represented by
% \begin{equation*}
% % \label{eq:unicycle_aux}
%     \tilde z
%     :=
%     \psi(x)
%     =
%     \begin{bmatrix}
%         \cos\theta\\
%         \sin\theta
%     \end{bmatrix},
%     \qquad
%     z
%     =
%     \begin{bmatrix}
%         x_1 & x_2 & \theta & c & s
%     \end{bmatrix}^{\top},
% \end{equation*}
% where $c:=\cos\theta$ and $s:=\sin\theta$.
Applying the chain rule gives
\begin{equation}
\label{eq:unicycle_chain}
    \dot c=-s\omega,
    \qquad
    \dot s=c\omega.
\end{equation}
The lifted dynamics are therefore
% \begin{equation}
% \label{eq:unicycle_lifted}
%     \dot z
%     =
%     \begin{bmatrix}
%         cv\\
%         sv\\
%         \omega\\
%         -s\omega\\
%         c\omega
%     \end{bmatrix},
% \end{equation}
\begin{equation}
\label{eq:unicycle_lifted}
    \dot z
    =
    \begin{bmatrix}
        cv & sv & \omega & -s\omega & c\omega
    \end{bmatrix}^{\top},
\end{equation}
which are polynomial in the lifted state and input. The lifting also
implies the polynomial equality
\begin{equation}
\label{eq:unicycle_constraint}
    \hat h(z)
    :=
    c^2+s^2-1
    =
    0.
\end{equation}
Hence, for this example,
$\hat{\mathcal K}=\{z\in\hat{\mathcal D}\mid c^2+s^2=1\}$.
% Hence, for this example,
% $$
%     \hat{\mathcal K}
%     =
%     \left\{
%         z\in\hat{\mathcal D}
%         \;\middle|\;
%         c^2+s^2=1
%     \right\}.
% $$

\begin{remark}[Reduced representation of orientation]
\label{rem:reduced_orientation}
The angle $\theta$ is redundant when the dynamics, state constraints, and
certificate depend on orientation only through
$c=\cos\theta$ and $s=\sin\theta$. In this case, it can be removed and
the reduced lifted state can be written as
$$
    z_r
    :=
    \begin{bmatrix}
        x_1 & x_2 & c & s
    \end{bmatrix}^{\top},
$$
with dynamics obtained from~\eqref{eq:unicycle_lifted} by omitting the
$\dot\theta$ equation. 
Thus, orientation is represented directly on the unit circle. If an angular
constraint is needed, it can be imposed directly as a semialgebraic constraint
in $(c,s)$ whenever it describes a heading direction rather than an unwrapped
angle. The same idea extends to other configurations, e.g.,
$\mathrm{SO}(3)$ or $\mathrm{SE}(3)$, only when the constraints can be written
directly in terms of the chosen algebraic orientation variables.
\end{remark}

\subsubsection{Embedding of Original Constraints}

In the generic lifted state~\eqref{eq:augmented_state}, the original
state $x$ remains a component of $z$. The polynomial state
constraints in~\eqref{eq:safe_set} can therefore be represented in
the lifted coordinates.
% as
% \begin{equation}
% \label{eq:lifted_state_constraints}
%     \hat{s}(z):=s(x),
%     \qquad
%     \hat{h}(z):=h(x).
% \end{equation}
If a reduced lifted representation removes a redundant original
coordinate, as in Remark~\ref{rem:reduced_orientation}, any constraint
depending on that coordinate must instead be expressed in terms of the
retained lifted variables. We use such a reduced representation only when
the corresponding constraints admit an equivalent semialgebraic
description; otherwise, the original coordinate is retained.

The lifted domain used for the input-feasibility and invariance
certificates combines the original state constraints with the
lifting-induced algebraic constraints:
% \begin{multline}    
% \label{eq:full_lifted_set}
%     \hat{\mathcal{K}}^{+}
%     :=
%     \left\{
%          z\in\hat{\mathcal{D}}
%         \;\middle|\;
%         {s}(x)\geq0,\;
%         {h}(x)=0, \right. \\
%         \left.
%         \hat{s}(z)\geq0,\;
%         \hat{h}(z)=0
%     \right\}.
% \end{multline}
\begin{equation}
\label{eq:full_lifted_set}
    \hat{\mathcal{K}}^{+}
    :=
    \left\{
         z\in\hat{\mathcal{K}}
        \;\middle|\;
        {s}(x)\geq0,\;
        {h}(x)=0
    \right\}.
\end{equation}    

The admissible input set $\mathcal U$ remains unchanged in the lifted
formulation. Although the synthesized control witness $\hat u(z)$ is
parameterized by the lifted state, its value is a physical control input in
$\mathbb{R}^{m}$. 
% Therefore, the synthesis enforces
% $$
%     \hat u(z)\in\mathcal U,
%     \qquad z\in\hat{\mathcal K}.
% $$

% ==========================================================
\subsection{Barrier Synthesis with Lifted Dynamics}
\label{sec:lifted_cbc}
% ==========================================================

We now formulate the synthesis problem using the lifted
polynomial dynamics and the associated algebraic constraints. 
The control witness in Definition~\ref{def:rcis} is a general
state-feedback map without a prescribed finite-dimensional
parameterization.

For computational synthesis, we restrict the control witness to a
polynomial map
$\hat u\in\mathbb{R}[z]^m$ and synthesize it jointly with a polynomial
barrier $\hat B\in\mathbb{R}[z]$. The candidate certified set in the
lifted coordinates is
\begin{equation}
\label{eq:lifted_candidate_set}
    \hat{\mathcal C}
    :=
    \left\{
        % z\in\mathbb{R}^{{n_z}}
        z\in\hat{\mathcal D}
        \;\middle|\;
        \hat B(z)\geq0
    \right\}.
\end{equation}

The lifted barrier and polynomial control witness must satisfy the
following conditions.

\paragraph{Safety}
The certified states that satisfy the lifting-induced constraints and
the original equality constraints must also satisfy every original
state-safety inequality:
\begin{equation}
\label{eq:lifted_C1}
    \left\{
        z\in\hat{\mathcal K}
        \;\middle|\;
        h(x)=0,\;
        \hat B(z)\geq0
    \right\}
    \subseteq
    \left\{
        z
        \;\middle|\;
         s(x)\geq0
    \right\}.
\end{equation}

\paragraph{Input feasibility}
The control witness must satisfy the physical input constraints
throughout the candidate certified set:
\begin{equation}
\label{eq:lifted_C2}
    \hat u(z)\in\mathcal U,
    \qquad
    \forall z\in
    \hat{\mathcal C}\cap\hat{\mathcal K}^{+}.
\end{equation}

\paragraph{Barrier-boundary invariance}
On \(\hat B(z)=0\) within the lifted certification domain, the
control witness must satisfy
\begin{equation}
\label{eq:lifted_C3}
    \nabla\hat B(z)^\top
    \bigl(\hat f(z)+\hat g(z)\hat u(z)\bigr)
    \geq 0 .
\end{equation}

We next express
\eqref{eq:lifted_C1}--\eqref{eq:lifted_C3}
as \gls{sos} constraints. 

A direct \gls{sos} encoding of the safety
containment~\eqref{eq:lifted_C1} would use
$\hat B(z)\geq0$ as one of the polynomial inequalities defining the
proof domain. Since both $\hat B$ and its corresponding
\gls{sos} multiplier would be decision variables, this encoding would
introduce their product and hence an additional bilinearity.

We avoid this coupling by enforcing safety through a 
contrapositive condition. For each state-safety inequality
$s_i(x)\geq0$, define
\begin{equation*}
% \label{eq:lifted_unsafe_domain}
    \hat{\mathcal K}^{-}_{i}
    :=
    \left\{
        z\in\hat{\mathcal K}
        \;\middle|\;
         h(x)=0,\;
        - s_i(x)\geq0
    \right\}.
\end{equation*}
The safety condition is then imposed as
\begin{equation}
\label{eq:sos_lifted_safety}
    -\hat B(z)-\varepsilon_B
    \geq_{\hat{\mathcal K}^{-}_{i}}0,
    \qquad
    i=1,\ldots,n_s,
    \qquad
    \varepsilon_B>0.
\end{equation}
Thus, every state on the non-safe side of the $i$-th state constraint,
including its boundary, satisfies
$\hat B(z)\leq-\varepsilon_B$. Consequently, no such state can belong
to $\hat{\mathcal C}$, which is sufficient for the containment in
\eqref{eq:lifted_C1}. Unlike the direct containment encoding,
\eqref{eq:sos_lifted_safety} is jointly affine in the coefficients of
$\hat B$ and its multipliers.

For input feasibility, let $F_j$ denote the $j$-th row of $F$ and
let $e_j$ denote the corresponding component of $e$. For each
input-polytope facet $j=1,\ldots,r$, we impose
\begin{equation}
\label{eq:sos_lifted_input}
    e_j
    -F_j\hat u(z)
    -\hat\mu_j(z)\hat B(z)
    \geq_{{\hat{\mathcal K}}^{+}} 0,
    \qquad
    \hat\mu_j\in\Sigma[z].
\end{equation}
For any
$z\in\hat{\mathcal C}\cap\hat{\mathcal K}^{+}$,
both $\hat\mu_j(z)$ and $\hat B(z)$ are nonnegative, so
\eqref{eq:sos_lifted_input} implies
$
    F_j\hat u(z)\leq e_j,
$
and hence $\hat u(z)\in\mathcal U$ throughout the candidate certified
set.

Finally, the boundary-invariance condition is encoded as
\begin{equation}
\label{eq:sos_lifted_invariance}
    \nabla\hat B(z)^\top
    \bigl(
        \hat f(z)+\hat g(z)\hat u(z)
    \bigr)
    +\hat\lambda(z)\hat B(z)
    \geq_{{\hat{\mathcal K}}^{+}} 0.
\end{equation}
The polynomial $\hat\lambda\in\mathbb{R}[z]$ is unrestricted in sign. On $\hat B(z)=0$ within $\hat{\mathcal K}^{+}$, the multiplier term
vanishes and condition~\eqref{eq:sos_lifted_invariance} recovers the barrier-boundary requirement~\eqref{eq:lifted_C3}.

% ==========================================================
\subsection{Alternating Synthesis Algorithm}
\label{sec:algorithm_v2}
% ==========================================================

The lifted \gls{sos} program
\eqref{eq:sos_lifted_safety}--%
\eqref{eq:sos_lifted_invariance}
is not jointly convex. The input-feasibility and invariance
conditions contain the products
\(\hat\mu_j\hat B\), \(\hat\lambda\hat B\), and
\(\nabla\hat B^\top\hat g\hat u\), whereas the
contrapositive safety condition is affine in its decision
variables. 
We address this bilinearity through alternating
optimization. In the barrier step, with
\(\hat u_k\), \(\hat\lambda_k\), and
\(\hat\mu_{j,k}\) fixed, we solve over
\(\hat B_{k+1}\); in the control-witness step, with
\(\hat B_{k+1}\) fixed, we solve over
\(\hat u_{k+1}\), \(\hat\lambda_{k+1}\), and
\(\hat\mu_{j,k+1}\). Each resulting \gls{sos} subproblem is convex and is solved
as a feasibility problem.

To preserve the set certified at the previous iteration, the
barrier step additionally imposes
\begin{equation}
\label{eq:growth}
    \hat B_{k+1}(z)
    -
    \sigma_g(z)\hat B_k(z)
    \geq_{{\hat{\mathcal K}}^{+}} 0,
    \qquad
    \sigma_g\in\Sigma[z].
\end{equation}
Since \(\sigma_g\) is \gls{sos},
\eqref{eq:growth} implies that every state certified at
iteration \(k\) remains certified at iteration \(k+1\);
hence, the certified sets are nested across iterations.

An iteration is accepted only if both subproblems are feasible;
otherwise, the certificate from the last completed iteration is
returned. The complete procedure is summarized in
Algorithm~\ref{alg:alternation_v2}.

\begin{algorithm}[t]
\caption{\gls{ldbb} synthesis by alternating optimization}
\label{alg:alternation_v2}
\begin{algorithmic}[1]
\REQUIRE Lifted dynamics~\eqref{eq:lifted_dynamics},
domain ${\hat{\mathcal K}}^{+}$ in~\eqref{eq:full_lifted_set}, input set $\mathcal U$,
polynomial degrees, admissible initial witness $\hat u_0$,
initial multiplier $\hat\lambda_0$, and iteration limit
\ENSURE Lifted certificate
$(\hat B,\hat u,\hat\lambda)$

\STATE Set $k\leftarrow0$ and
$\hat\mu_{j,0}\leftarrow0$, $j=1,\ldots,r$

\STATE With $\hat u_0$, $\hat\lambda_0$, and
$\hat\mu_{j,0}$ fixed, solve
\eqref{eq:sos_lifted_safety}--\eqref{eq:sos_lifted_invariance}
for $\hat B_0$

\IF{the initialization problem is infeasible}
    \STATE \textbf{return failure}
\ENDIF

\WHILE{$k$ is below the iteration limit}

    \STATE With
    $\hat u_k$, $\hat\lambda_k$, and
    $\hat\mu_{j,k}$ fixed, solve
    \eqref{eq:sos_lifted_safety},
    \eqref{eq:sos_lifted_input},
    \eqref{eq:sos_lifted_invariance}, and
    \eqref{eq:growth}
    for $\hat B_{k+1}$ and $\sigma_g$

    \IF{the barrier step is infeasible}
        \STATE \textbf{return}
        $(\hat B_k,\hat u_k,\hat\lambda_k)$
    \ENDIF

    \STATE With $\hat B_{k+1}$ fixed, solve
    \eqref{eq:sos_lifted_input} and
    \eqref{eq:sos_lifted_invariance}
    for $\hat u_{k+1}$, $\hat\lambda_{k+1}$, and
    $\hat\mu_{j,k+1}$, $j=1,\ldots,r$

    \IF{the control-witness step is infeasible}
        \STATE \textbf{return}
        $(\hat B_k,\hat u_k,\hat\lambda_k)$
    \ENDIF

    \STATE $k\leftarrow k+1$

\ENDWHILE

\RETURN $(\hat B_k,\hat u_k,\hat\lambda_k)$
\end{algorithmic}
\end{algorithm}

% ==========================================================
\subsubsection{Recovery in the Original Coordinates}
\label{sec:recovery}
% ==========================================================

Let
$(\hat B,\hat u,\hat\lambda)$ be a feasible solution of the lifted
synthesis problem. The corresponding functions in the original
coordinates are obtained by composition with the lifting map:
\begin{equation}
\label{eq:pullback_function}
    B(x):=\hat B(\phi(x)),
    \
    u_B(x):=\hat u(\phi(x)),
    \
    \lambda(x):=\hat\lambda(\phi(x)).
\end{equation}
By Proposition~\ref{prop:trajectory_consistency}, these functions define
the corresponding barrier, control witness, and multiplier along the
trajectories of the original system. 

\begin{remark}[Advantage of continuous-time synthesis]
\label{rem:discrete_drift}
A direct discrete-time synthesis based on a numerical
discretization of the lifted dynamics raises two difficulties:
the barrier-control composition remains nonconvex even when
the barrier is fixed, and the discretized dynamics may not
preserve the lifting-induced algebraic constraints.

To illustrate the first issue, consider the forward-Euler
discrete-time invariance condition
\[
    \hat B\!\left(
        z_k+T_s\bigl(
        \hat f(z_k)+\hat g(z_k)\hat u(z_k)
        \bigr)
    \right)\geq0,
    \qquad
    z_k\in\hat{\mathcal C}\cap\hat{\mathcal K}^{+}.
\]
This condition composes the barrier with the
control-dependent successor state, causing the unknown
coefficients of \(\hat u\) to enter nonlinearly through the
monomials of \(\hat B\). Consequently, the control-witness
subproblem remains nonconvex even when \(\hat B\) is fixed,
and the standard continuous-time alternating procedure does
not directly resolve this composition issue~\cite{shakhesi2025synthesisdiscretetimecontrolbarrier}.
By contrast, the continuous-time invariance condition
\eqref{eq:sos_lifted_invariance} is affine in
\(\hat u\) and \(\hat\lambda\) when \(\hat B\) is fixed.

The second issue concerns preservation of the
lifting-induced algebraic constraints. 
For instance, applying forward Euler to the auxiliary dynamics
in~\eqref{eq:unicycle_chain}
gives
\[
    c_{k+1}^2+s_{k+1}^2
    =
    1+\omega_k^2T_s^2
\]
% when \(c_k^2+s_k^2=1\).
when the lifting constraint~\eqref{eq:unicycle_constraint}
holds at \(k\).
Thus, a state on the lifting manifold
generally produces a successor outside it. We therefore
synthesize the certificate using the exact continuous-time
lifted dynamics and account for sampled-data \gls{zoh}
implementation separately in
Section~\ref{sec:zoh_safety_filter}.
\end{remark}

\subsection{Barrier Synthesis with Piecewise-Polynomial Dynamics}
\label{sec:pwbb_method}

\gls{pwbb} applies similar \gls{bbs} conditions in the
original coordinates using the piecewise-polynomial approximation model in~\eqref{eq:pw_prelim_dynamics}. 
%% for sector ie lambda(x)
A barrier \(B\) and control witness \(u_B\) are shared
across all sectors, while the invariance multiplier
\(\lambda_q\in\mathbb{R}[x]\), \(q=1,\ldots,Q\), is
sector dependent.
For each sector, define
$$
    \mathcal K_q:=\mathcal S\cap\mathcal I_q,
    \qquad q=1,\ldots,Q.
$$

For each state-safety constraint, define
\begin{equation}
\label{eq:pwbb_unsafe_domain}
    \mathcal K^{-}_{q,i}
    :=
    \left\{
        x\in\mathcal D\cap\mathcal I_q
        \;\middle|\;
        h(x)=0,\;
        -s_i(x)\geq0
    \right\}.
\end{equation}

%% for sector wise lambda
The counterparts of
\eqref{eq:sos_lifted_safety}--%
\eqref{eq:sos_lifted_invariance} are
\begin{subequations}
\label{eq:pwbb_sos_conditions}
\begin{align}
    - B(x)-\varepsilon_B
    &\geq_{\mathcal K^{-}_{q,i}}0,
    \label{eq:pwbb_C1}
    \\
    e_j-F_j u_B(x)
    -\mu_j(x) B(x)
    &\geq_{\mathcal K_q}0,
    \label{eq:pwbb_C2}
\end{align}
\begin{equation}
\begin{aligned}
    &\nabla B(x)^\top
     \bigl( f_q(x,\delta) + g_q(x,\delta)u_B(x) \bigr)
    \\
    &\qquad
    +\lambda_q(x) B(x)
    \geq_{\mathcal K_q\times\Delta_q}0 .
\end{aligned}
\label{eq:pwbb_C3}
\end{equation}
\end{subequations}
Here,
$i=1,\ldots,n_s$, $j=1,\ldots,r$, and
$q=1,\ldots,Q$. The multipliers
$\mu_j\in\Sigma[x]$ are shared across sectors, whereas
$\lambda_q\in\mathbb R[x]$ is synthesized separately
on each sector.

\paragraph{Periodic-boundary consistency}
Because the same global polynomial \(B\) is used in all sectors,
no additional continuity constraints are required at internal
sector boundaries. However, for a periodic coordinate
\(\theta\in[-\pi,\pi]\), the points
\((\bar x,\pi)\) and \((\bar x,-\pi)\) represent the same
physical state. This identification can be viewed as a
deterministic coordinate reset, analogous to reset transitions
in hybrid barrier-certificate formulations~\cite{prajna_hybrid}.
Since the two endpoints represent the same physical state, we
use the value-preserving condition
\begin{equation}
\label{eq:pwbb_wrap_C0}
     B(\bar x,\pi)
    =
     B(\bar x,-\pi),
    \qquad \forall\,\bar x .
\end{equation}
This preserves the barrier sign, and hence certified-set
membership, across the coordinate reset.
For
$$
     B(\bar x,\theta)
    =
    \sum_{\alpha,k}
    b_{\alpha k}\bar x^\alpha\theta^k,
$$
this is enforced by the linear constraints
\begin{equation}
\label{eq:pwbb_wrap_coeff}
    \sum_{\substack{k\ \mathrm{odd}}}
    2\pi^k b_{\alpha k}
    =
    0,
    \qquad \forall\,\alpha .
\end{equation}

\begin{figure}[t]
    \centering
    \begin{subfigure}[b]{0.48\linewidth}
        \centering
        \includegraphics[width=\linewidth]{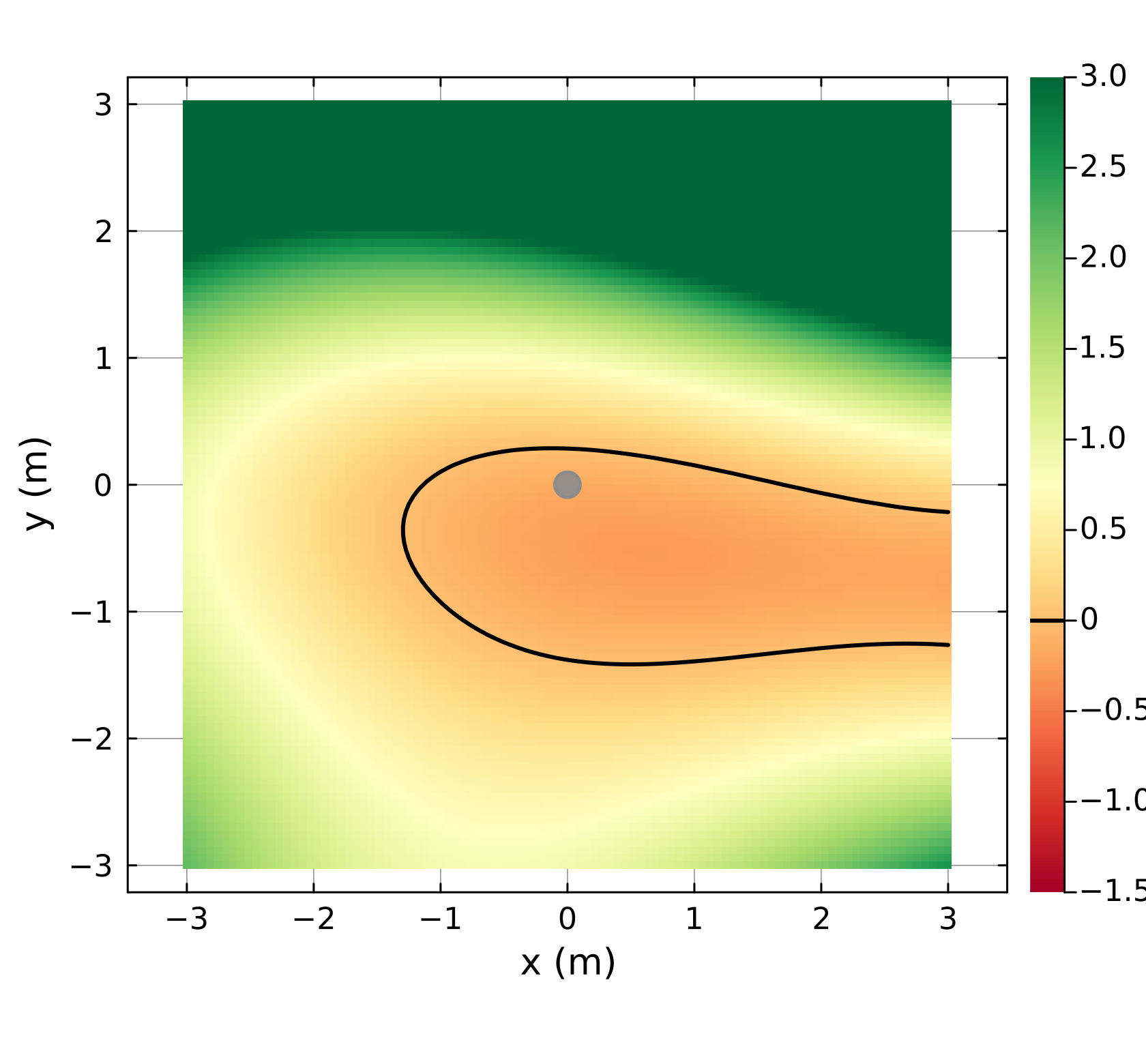}
        \subcaption{Without consistency, $\theta=\pi$}
    \end{subfigure}\hfill
    \begin{subfigure}[b]{0.48\linewidth}
        \centering
        \includegraphics[width=\linewidth]    {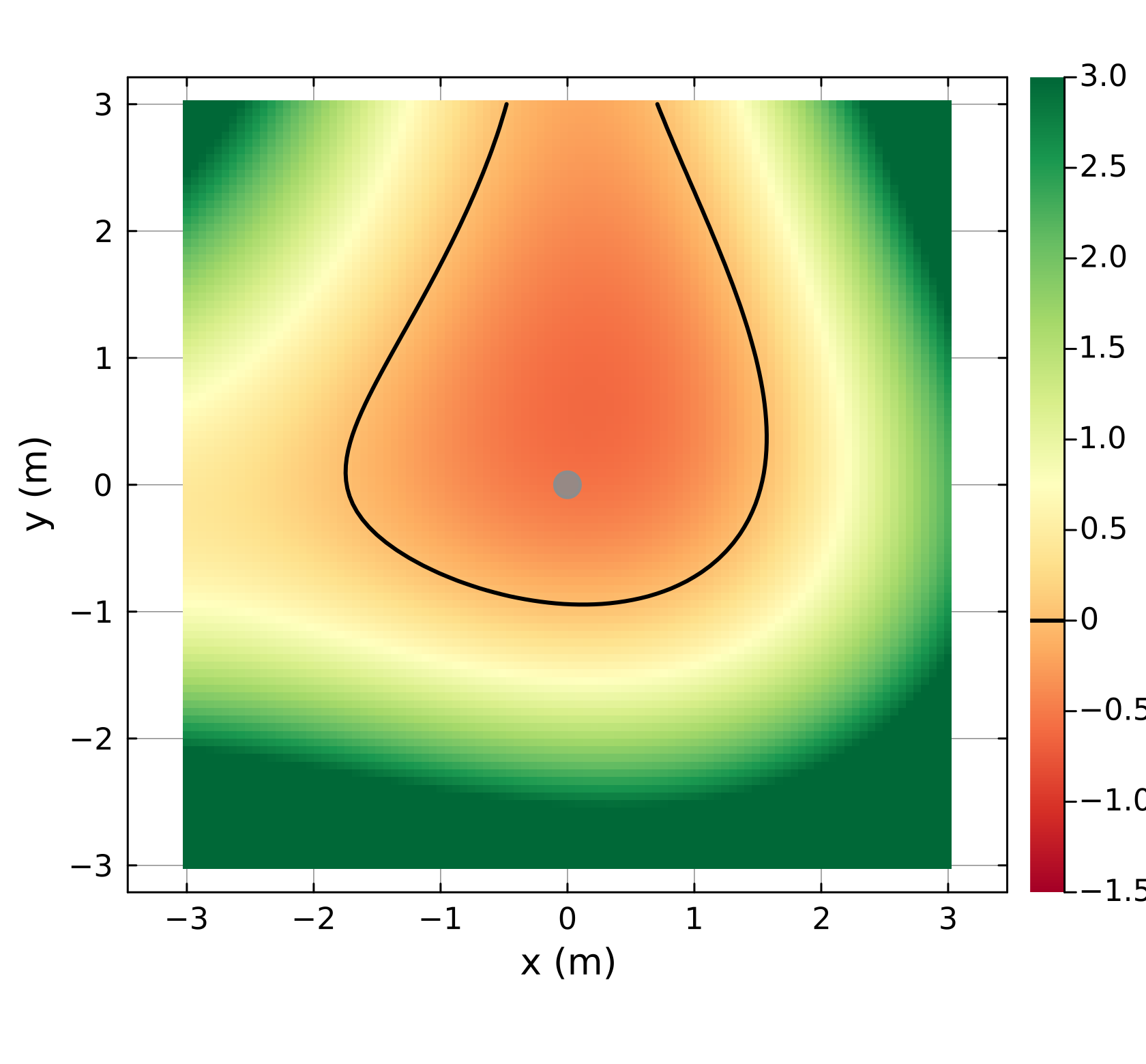}
        \subcaption{Without consistency, $\theta=-\pi$}
    \end{subfigure}
    \vspace{0.4em}
    \begin{subfigure}[b]{0.48\linewidth}
        \centering
        \includegraphics[width=\linewidth]
        {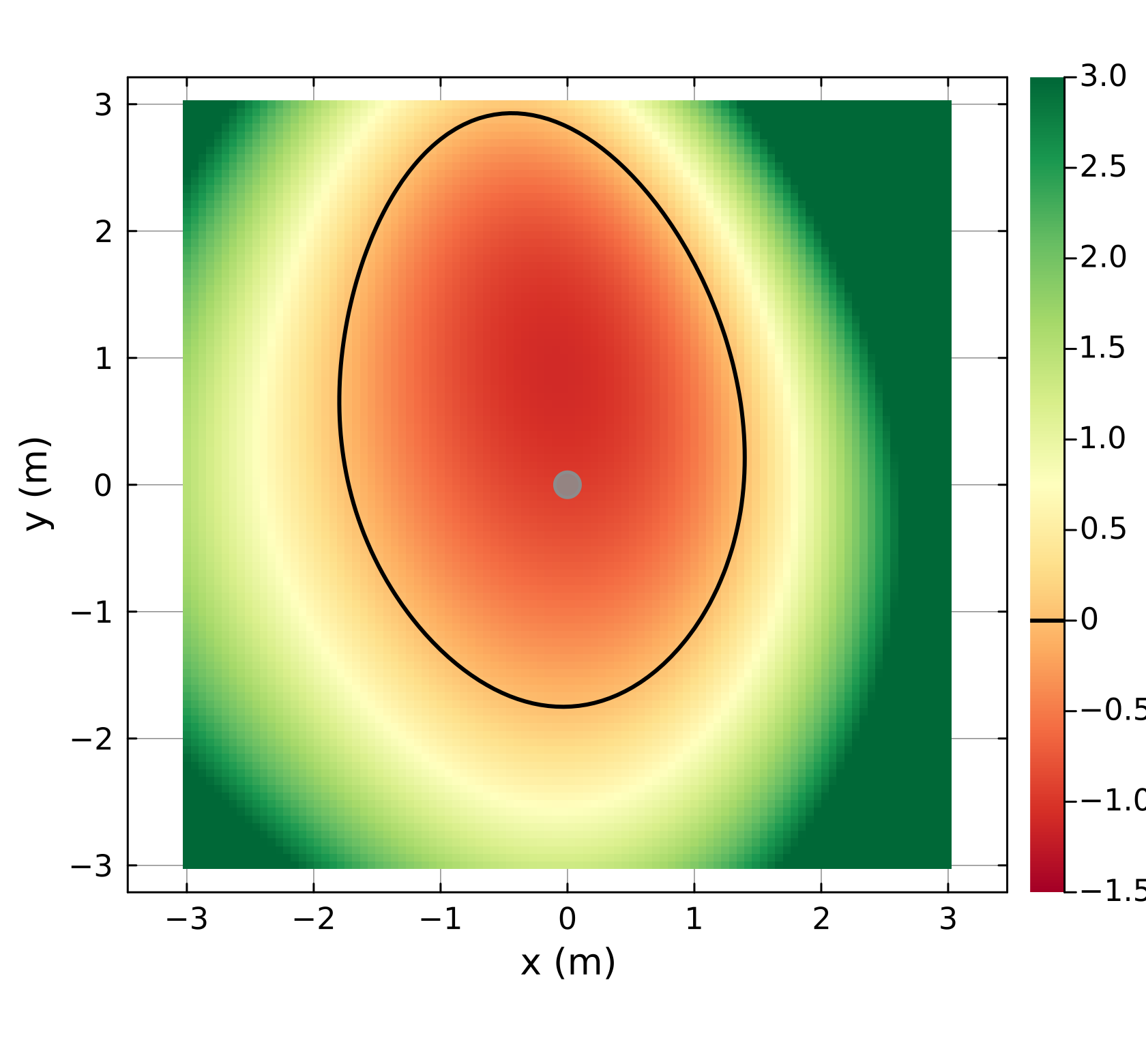}
        \subcaption{With consistency, $\theta=\pi$}
    \end{subfigure}\hfill
    \begin{subfigure}[b]{0.48\linewidth}
        \centering
        \includegraphics[width=\linewidth]{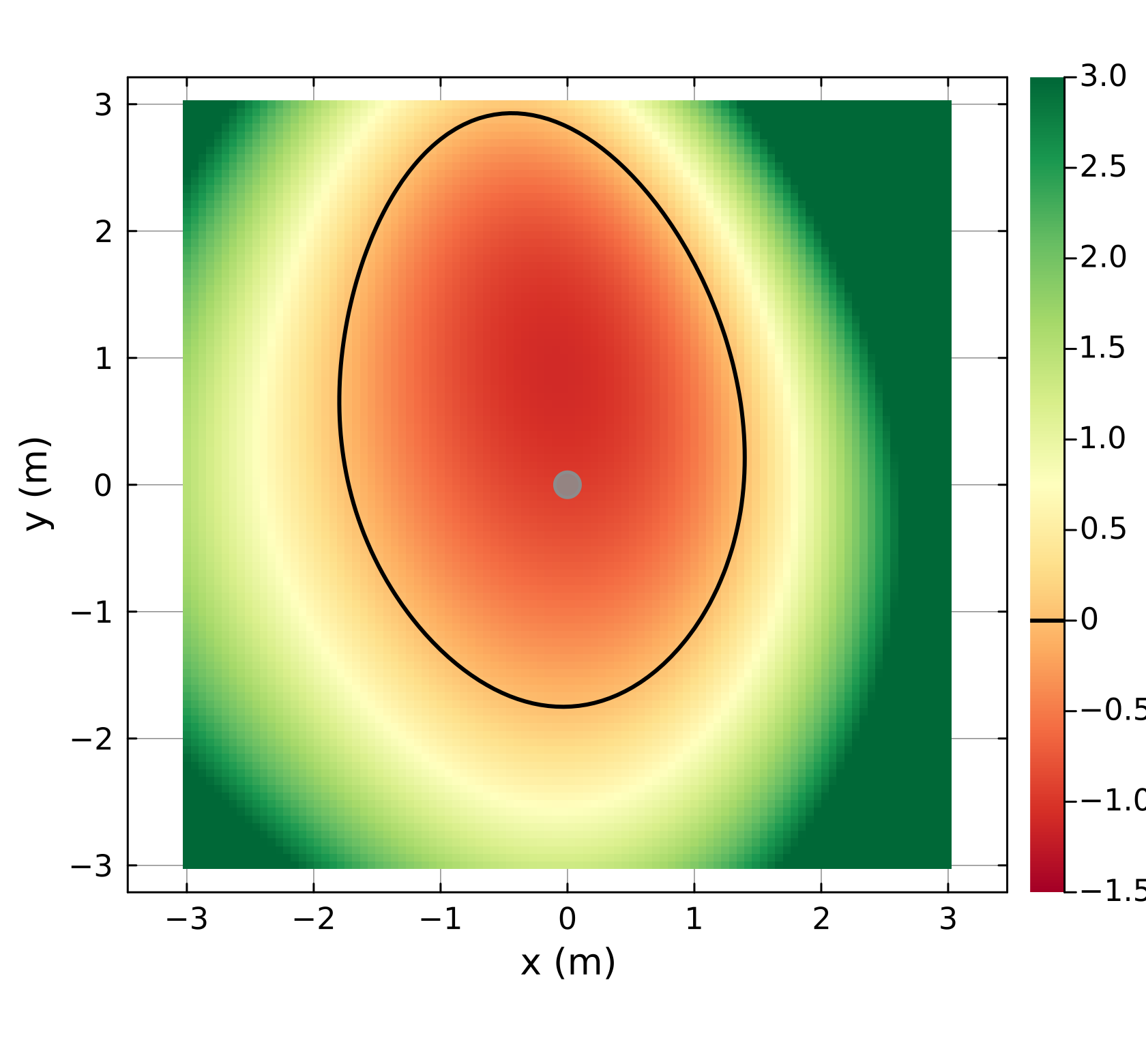}
        \subcaption{With consistency, $\theta=-\pi$}
    \end{subfigure}

    \caption{\acrshort{pwbb} slices for \acrshort{CT} model at $\theta=\pi$ (left) and
    $\theta=-\pi$ (right) and rest at: $v=0.3,\ \omega=0.0$, without periodic consistency (top)
    and with \eqref{eq:pwbb_wrap_C0} (bottom). Gray denotes the
    obstacle, the black contour denotes $ B(x)=0$, and
    $ B(x)\geq0$ is the certified set.}
    \label{fig:pwbb_wrap_consistency}
\end{figure}

Condition~\eqref{eq:pwbb_wrap_C0} enforces equality only at
the identified endpoints, rather than the stronger symmetry
$ B(\bar x,\theta)= B(\bar x,-\theta)$.
Figure~\ref{fig:pwbb_wrap_consistency} illustrates this issue
for the \gls{CT} model described in
Section~\ref{sec:experiments}. Without periodic-boundary
consistency, the barrier takes opposite signs for \(38.66\%\)
of \(10{,}000\) paired states representing the same physical
state at \(\theta=-\pi\) and \(\theta=\pi\). Enforcing
\eqref{eq:pwbb_wrap_C0} eliminates these sign inconsistencies.

\paragraph{Alternating synthesis}
\gls{pwbb} follows the alternating procedure of~\secref{sec:algorithm_v2}, with the lifted quantities and
conditions replaced by their counterparts in the \gls{pwbb}
formulation~\eqref{eq:pwbb_sos_conditions}. The growth condition
\eqref{eq:growth} is applied in the original coordinates to preserve the previous certified set,
and the equivalent linear constraints~\eqref{eq:pwbb_wrap_coeff} are imposed in every
barrier optimization step, including initialization.

\section{Sampled-Data Safety Filter for JBS}
\label{sec:zoh_safety_filter}

The \gls{bbs} methods return a barrier \(B\), a control witness \(u_B\), and
a multiplier \(\lambda\) certifying continuous-time invariance on
\(\mathcal C\). Deploying this certificate through a sampled-data
\gls{cbf}-\gls{qp} raises two issues. First, \(\lambda\) is unrestricted in
sign, so wherever \(\lambda(x)<0\) and \(B(x)>0\) the certified condition
requires the barrier to increase, which is needlessly restrictive at
runtime; we therefore replace \(\lambda\) by a fixed rate \(\gamma\).
Second, enforcing the barrier condition only at the sampling instants does
not prevent \(B\) from becoming negative between them, since the input is
frozen under \gls{zoh}.

We address the first by certifying an upper bound \(\bar\lambda\) on
\(\lambda\), which yields a lower bound on the admissible \(\gamma\), and
the second by tightening the runtime constraint with the local second-order
margin of~\cite{9417092}, which yields an upper bound in terms of \(T_s\).
The resulting filter keeps \(B(x(t))\geq0\) throughout each hold whenever
the \gls{qp} is feasible at the sampling instant. 
For a given \(T_s\), the
bound \(\bar\lambda\) is computed offline and fixes the interval from which
\(\gamma\) is selected; the margin \(\nu_k\) is computed at each instant.

\subsection{Runtime Filter and Fixed Rate}
\label{sec:zoh_filter}

For a held input \(u\in\mathcal U\), let
\(f_u(x):=f(x)+g(x)u\) denote the corresponding vector field
from~\eqref{eq:dynamics}. Define the fixed-rate barrier expression as
\begin{equation}
\label{eq:zoh_invariance_expression}
    r_\gamma(x,u)
    :=\nabla B(x)^\top f_u(x)+\gamma B(x),
\end{equation}
with \(\gamma>0\). At \(t_k=kT_s\), with \(x_k:=x(t_k)\), the filter solves
\begin{equation}
\label{eq:zoh_qp}
\begin{aligned}
    u_k^\star\in\arg\min_{u\in\mathcal U}
    &\quad \tfrac{1}{2}\lVert u-u_{\mathrm{nom},k}\rVert^2 \\
    \mathrm{s.t.}
    &\quad r_\gamma(x_k,u)\geq\nu_k,
\end{aligned}
\end{equation}
where \(\nu_k\geq0\) is the inter-sample margin defined
in~\eqref{eq:zoh_nu}. A minimizer exists whenever the feasible set is
nonempty, and is held constant on \([t_k,t_{k+1})\). The witness \(u_B\) is
never applied; it is used only to derive the rate bound below.

By Proposition~\ref{prop:trajectory_consistency}, the invariance
condition~\eqref{eq:sos_lifted_invariance} evaluated at \(z=\phi(x)\) gives
\(\nabla B^\top f_{u_B}+\lambda B\geq0\) on \(\mathcal C\) in the original
coordinates~\eqref{eq:pullback_function}. Adding and subtracting
\(\lambda(x)B(x)\) in~\eqref{eq:zoh_invariance_expression},
% Evaluating the invariance condition~\eqref{eq:sos_lifted_invariance} at
% \(z=\phi(x)\) and using the recovery map~\eqref{eq:pullback_function} gives,
% by Proposition~\ref{prop:trajectory_consistency},
% \(\nabla B^\top f_{u_B}+\lambda B\geq0\) on \(\mathcal C\). Adding and
% subtracting \(\lambda(x)B(x)\) in~\eqref{eq:zoh_invariance_expression},
\begin{equation}
\label{eq:gamma_bridge}
    r_\gamma\bigl(x,u_B(x)\bigr)\geq\bigl(\gamma-\lambda(x)\bigr)B(x),
    \qquad x\in\mathcal C .
\end{equation}
Both \(\lambda\) and \(\gamma\) prescribe an admissible decay rate for
\(B\), so the smaller rate is the stricter requirement: if
\(\gamma<\lambda(x)\) where \(B(x)>0\), the filter asks for slower decay
than was certified during synthesis, and the witness need not satisfy the runtime
constraint. Imposing
\begin{equation}
\label{eq:gamma_lower}
    \gamma\geq\bar\lambda\geq\lambda(x),
    \qquad x\in\mathcal C,
\end{equation}
therefore gives \(r_\gamma(x,u_B(x))\geq0\) on \(\mathcal C\): the witness
satisfies the untightened fixed-rate condition throughout the certified set.

\subsection{Local Second-Order Inter-Sample Margin}
\label{sec:zoh_margin}

Since \(\dot u=0\) under \gls{zoh}, differentiating \(B\) twice along the
original dynamics gives
\begin{equation}
\label{eq:zoh_Bddot}
    \ddot B(x,u)=
    f_u(x)^\top\nabla^2B(x)f_u(x)
    +\nabla B(x)^\top\frac{\partial f_u}{\partial x}(x)f_u(x).
\end{equation}
Only downward curvature can reduce \(B\). Let \(\mathcal R_k^{+}\)
overapproximate the states reachable from \(x_k\) over \([0,T_s]\) under any
constant \(u\in\mathcal U\), and let \(\eta_k\geq0\) satisfy
\begin{equation}
\label{eq:zoh_eta}
    \eta_k\geq
    \sup_{\xi\in\mathcal R_k^{+},\,u\in\mathcal U}
    \bigl[-\ddot B(\xi,u)\bigr]_+ .
\end{equation}
All inputs are covered because \(\nu_k\) is computed
before~\eqref{eq:zoh_qp} selects \(u_k^\star\). Evaluating the bound locally
on \(\mathcal R_k^{+}\) rather than uniformly over \(\mathcal C\) reduces the
tightening.

Following~\cite[Th.~3]{9417092}, let \(x_k\in\mathcal C\) and let
\(u_k\in\mathcal U\) be feasible for~\eqref{eq:zoh_qp}. Integrating
\(\ddot B\geq-\eta_k\) twice and using
\(\dot B(x_k,u_k)\geq\nu_k-\gamma B(x_k)\), which is the constraint
in~\eqref{eq:zoh_qp}, gives for \(\tau\in[0,T_s]\)
\begin{equation}
\label{eq:zoh_raw_bound}
    B\bigl(x(t_k+\tau)\bigr)\geq
    (1-\gamma\tau)B(x_k)+\nu_k\tau-\frac{\eta_k}{2}\tau^2 .
\end{equation}
The last two terms are nonnegative for all \(\tau\in[0,T_s]\) exactly when
\(\nu_k\geq\tau\eta_k/2\) on that interval, and the smallest such margin is obtained at
\(\tau=T_s\), 
yielding
\begin{equation}
\label{eq:zoh_nu}
    \nu_k:=\frac{T_s}{2}\eta_k .
\end{equation}
With this choice,
\begin{equation}
\label{eq:zoh_inter_sample_bound}
\begin{aligned}
B\bigl(x(t_k+\tau)\bigr)
&\geq (1-\gamma\tau)B(x_k) \\
&\quad+\frac{\eta_k}{2}\tau(T_s-\tau).
\end{aligned}
\end{equation}
% This choice is sufficient but not necessary, since the first term already
% supplies margin when \(B(x_k)>0\). 
The second term is nonnegative on
\([0,T_s]\), and the first is nonnegative when \(\gamma T_s\leq1\). Hence
\(B(x(t))\geq0\) throughout the hold, and
applying the same argument at successive instants preserves
\(B(x(t))\geq0\) as long as~\eqref{eq:zoh_qp} remains feasible; recursive
feasibility is discussed in Remark~\ref{rem:zoh_recursive}.

Combining with~\eqref{eq:gamma_lower} gives the admissible rate interval
\begin{equation}
\label{eq:gamma_interval}
    \bar\lambda\leq\gamma\leq\frac{1}{T_s},
\end{equation}
nonempty when \(T_s\bar\lambda\leq1\). The two bounds have opposite origins:
the lower keeps the synthesized certificate admissible at runtime, the upper
limits how much of the barrier one hold may consume. Within the interval, a
larger \(\gamma\) relaxes the constraint in the interior of \(\mathcal C\)
and therefore intervenes less.

\subsection{Certification of the Multiplier Bound}
\label{sec:zoh_certification}

The filter is common to both representations, which differ only in the set
over which \(\bar\lambda\) is certified; \eqref{eq:zoh_Bddot} is evaluated
on the original dynamics in both cases. For \gls{ldbb}, the bound is
computed in lifted coordinates,
\begin{equation}
\label{eq:ldbb_lambda_bound}
    \bar\lambda:=\min\bigl\{\ell\in\mathbb R:\;
    \ell-\hat\lambda(z)
    \geq_{\hat{\mathcal C}\cap\hat{\mathcal K}^{+}}0\bigr\},
\end{equation}
and evaluating at \(z=\phi(x)\) gives \(\lambda(x)\leq\bar\lambda\) on
\(\mathcal C\). For \gls{pwbb}, the sector-wise
condition~\eqref{eq:pwbb_C3} makes the same argument valid in every sector,
so
\begin{equation}
\label{eq:pwbb_lambda_bound}
    \bar\lambda_{\mathrm{PW}}:=\min\bigl\{\ell\in\mathbb R:\;
    \ell-\lambda_q(x)\geq_{\mathcal C\cap\mathcal K_q}0,\;
    q=1,\ldots,Q\bigr\},
\end{equation}
and~\eqref{eq:gamma_interval} applies with \(\bar\lambda_{\mathrm{PW}}\).

\begin{remark}[Recursive feasibility]
\label{rem:zoh_recursive}
The guarantee above holds at each instant at which~\eqref{eq:zoh_qp} is
feasible. Recursive feasibility would follow if a certified witness margin
\(\underline m_\gamma\leq r_\gamma(x,u_B(x))\) on
\(\mathcal C\cap\mathcal S\) satisfied \(\underline m_\gamma\geq\nu_k\),
since \(u_B(x_k)\) would then be a feasible candidate at every certified
state. Obtaining a tight \(\underline m_\gamma\) is not straightforward: the
boundary \(\{B=0\}\) lies in \(\mathcal C\cap\mathcal S\), and there
\(\gamma B\) vanishes, so \(\underline m_\gamma\) is limited by the inward
flow the witness provides on that boundary, independently of \(\gamma\).
Enlarging that reserve during synthesis makes the condition attainable but
shrinks the certified set, so recursive feasibility and certified coverage
trade off against each other.
% ; selecting the pair \((\gamma,T_s)\) from this
% comparison is left for future work.
\end{remark}
\section{Successive-Barrier Synthesis}
\label{sec:successive_barrier_synthesis}

We additionally consider the sample-guided \gls{sbs}
method of~\cite{Wajid_successive}. Given a finite set of physical
synthesis samples and a finite set of admissible controls
\(\mathcal U_{\mathrm{fin}}\subset\mathcal U\), the method
successively constructs polynomial barriers associated with
fixed controls and removes samples covered by each accepted
barrier. We use the nonnegative-superlevel convention
in~\eqref{eq:candidate_set}, reversing the barrier sign used
in~\cite{Wajid_successive}. Its implementation with
the piecewise-polynomial dynamics~\eqref{eq:pw_prelim_dynamics}
is denoted by \gls{pwsb}.

To obtain \gls{ldsb}, we retain the same sample-guided search,
fixed-control candidates, and candidate-selection procedure through transit-time feasibility,
but replace the piecewise-polynomial dynamics with the exact
lifted dynamics~\eqref{eq:lifted_dynamics}. Let
\(\hat{\mathcal B}_{<\ell}\) denote the barriers accepted before
level \(\ell\). The level-\(\ell\) derivative condition is
enforced on
\[
    \hat{\Omega}_{\ell}
    :=
    \hat{\mathcal K}
    \cap
    \bigcap_{\hat B\in\hat{\mathcal B}_{<\ell}}
    \{z\mid \hat B(z)\leq0\},
\]
as
\begin{equation}
\label{eq:ldsb_invariance}
    \nabla\hat B_{\ell,r}(z)^\top
    \bigl(\hat f(z)+\hat g(z)u_{\ell,r}\bigr)
    +\lambda_{\mathrm{SB}}\hat B_{\ell,r}(z)
    \geq_{\hat{\Omega}_{\ell}}0,
\end{equation}
where \(u_{\ell,r}\in\mathcal U_{\mathrm{fin}}\) and
\(\lambda_{\mathrm{SB}}>0\) are fixed during each candidate
synthesis. Unsafe-set separation is imposed as
in~\eqref{eq:sos_lifted_safety}, with the lifting-induced
equalities enforced through unrestricted polynomial
multipliers. Thus, \gls{ldsb} replaces the sector-wise
approximation-error conditions of \gls{pwsb} with certification
on the lifting-induced algebraic set.

The remaining sample-removal and transit-time feasibility tests
are unchanged from~\cite{Wajid_successive}; physical
samples are evaluated in \gls{ldsb} through
\(z=\phi(x)\).
\section{Experiments}
\label{sec:experiments}

We evaluate exact polynomial lifting against piecewise-polynomial
approximation on the \gls{CT} and \gls{PM} systems, within both
synthesis methods introduced above (\gls{pwbb} versus \gls{ldbb}, and
\gls{pwsb} versus \gls{ldsb}), to investigate three questions:
(i) how lifting affects certified coverage and computational cost
relative to piecewise approximation, and how the sector resolution of
\gls{pwbb} affects this comparison;
(ii) how the \gls{pwbb}- and \gls{ldbb}-based safety filters compare in
trajectory safety, intervention, task performance, and online
computation; and
(iii) whether \gls{ldsb} improves coverage and computational cost
relative to \gls{pwsb}.

For~\gls{bbs}, we additionally evaluate
closed-loop safety on a target-tracking task, with all
trajectories simulated using the original non-polynomial
dynamics.

All computations were performed on a workstation running
Ubuntu 24.04.1 LTS with an AMD Ryzen Threadripper PRO
7965WX processor and 256 GB of RAM, using Julia 1.12.5;
MOSEK was used for SOS optimization with each SDP solve
limited to two threads.

\paragraph{Evaluation method}
Comparisons are matched within each synthesis method.
For \gls{pwbb} and \gls{ldbb}, we use the same certification
domain, unsafe set, evaluation samples, and polynomial degrees
($\deg B=4$, $\deg u_B=2$) for each system.
All reported
certificates are independently re-verified after synthesis as
described in~\appref{app:post_synthesis_verification}.

Estimated certified coverage is evaluated on $N=10{,}000$ uniformly
sampled physical states, with lifted certificates evaluated
through~\eqref{eq:pullback_function}; the metric is defined
in~\appref{app:evaluation_metrics}.
For the \gls{bbs} comparison,
synthesis is run for $100$ iterations for \gls{CT} and
$250$ for \gls{PM}, terminating earlier if a
subproblem becomes infeasible (Algorithm~\ref{alg:alternation_v2}). Empirical convergence is defined as the
first iteration whose estimated coverage is within one
percentage point of the maximum observed over the complete
run.

For \gls{pwbb} and \gls{ldbb} comparison, we simulate a target-tracking task to evaluate safety and
filtering performance. For filtering evaluation, minimal intervention is measured by
$J_{\mathrm{int},2}$, corresponding to the objective in~\eqref{eq:zoh_qp}.
Goal-tracking performance
and task completion time are measured by $J_g$ and
$T_{\mathrm{reach}}$, respectively, with
$T_{\mathrm{reach}}$ evaluated over successful trajectories for
each method. Conservativeness around the obstacle is assessed by the
minimum obstacle clearance \(d_{\min}\) attained along the
trajectory.
The metric definitions are provided in
~\appref{app:evaluation_metrics}.

For \gls{ldsb}, we compare against the \gls{pwsb} implementation provided with~\cite{Wajid_successive} on its non-polynomial \gls{CT} and \gls{PM} benchmarks,
using the same domains and polynomial degree.
The comparison is done on different sampling sizes to study the effect on barrier-bank size, coverage, synthesis time, and
memory use; further details are given in the supplementary
material.

\subsection{System Dynamics and Case Studies}
\label{sec:case_studies}

We consider two robotic systems with trigonometric dynamics:
the \gls{CT} model and a \gls{PM}. 
For each system,
we start from the original non-polynomial dynamics~\eqref{eq:dynamics}
and construct the two model representations introduced in
~\secref{sec:polynomial_representations}. The
piecewise-polynomial representation~\eqref{eq:pw_prelim_dynamics} 
is defined over the sectors 
$\mathcal I_q$, whereas the exact lifted representation~\eqref{eq:lifted_dynamics} 
satisfies the lifting-induced algebraic constraints in
\eqref{eq:lifting_relations}. 

For both systems, let
$p=[p_x,p_y]^\top$ denote the position,
$\theta$ the orientation, $\omega$ the angular velocity, and
$u=[u_1,u_2]^\top$ the control input. Both systems must avoid
the circular unsafe set
\begin{equation}
\label{eq:case_study_unsafe_set}
    \mathcal X_u
    :=
    \left\{
        x
        \;\middle|\;
        r_o^2-p_x^2-p_y^2\geq 0
    \right\},
    \qquad
    r_o=0.1 .
\end{equation}

\paragraph{\gls{CT} model.}
The \gls{CT} state is
$
    x
    =
    [p_x,p_y,v,\theta,\omega]^\top,
$
where $v$ is the forward speed. Its dynamics are
\begin{equation}
\label{eq:ct_original_dynamics}
    % \Psi_{\mathrm{CT}}\colon\quad
    \dot x
    =
    \begin{bmatrix}
        v\cos\theta &
        v\sin\theta &
        u_1 &
        \omega &
        u_2
    \end{bmatrix}^{\top}.
\end{equation}
where $u\in[-1,1]^2$.
\paragraph{\gls{PM} model.}
The \gls{PM} state is
$
    x
    =
    [p_x,p_y,v_x,v_y,\theta,\omega]^\top,
$
where $[v_x,v_y]^\top$ is the translational velocity. Its
dynamics are
\begin{equation}
\label{eq:pm_original_dynamics}
% \Psi_{\mathrm{PM}}\colon\quad
\begin{aligned}
    &\dot p_x = v_x, \ 
    \dot p_y = v_y, \ 
    \dot v_x = (u_1+u_2)\sin\theta, \\
    &\dot v_y = (u_1+u_2)\cos\theta-g_0, \ 
    \dot\theta = \omega, \ 
    \dot\omega = u_1-u_2,
\end{aligned}
\end{equation}
where $g_0=2$ is the normalized gravitational acceleration and $u\in[-2,2]^2$.
In contrast to the \gls{CT} model, the \gls{PM} contains
input-dependent trigonometric terms. The corresponding
piecewise-polynomial and exact lifted representations are provided in~\appref{app:experimental_models}.

\subsection{Joint Barrier Synthesis Results}
\paragraph{Synthesis settings}
For these experiments, the \gls{CT}
certification domain $\mathcal D$ in the original
coordinates is
\[
    p_x,p_y,v\in[-2,2],\qquad
    \theta\in[-\pi,\pi],\qquad
    \omega\in[-1,1].
\]
For \gls{PM},
% the certification domain
$\mathcal D$ is
\[
    p_x,p_y\in[-2,2],\qquad
    v_x,v_y\in[-10,10],
\]
\[    \theta\in[-\pi,\pi],\qquad
    \omega\in[-10,10].
\]
The barrier scale is normalized at a common reference point so that
values are comparable across methods.

\paragraph{Synthesis Results}
As shown in \figref{fig:bb_synthesis_convergence}, increasing
the \gls{pwbb} sector resolution from $Q=4$ to $Q=16$
raises the coverage 
at empirical convergence from $45.3\%$
to $51.1\%$ for \gls{CT} and from $49.4\%$ to $51.9\%$
for \gls{PM}.
Over the same range, convergence shifts from
$39$ to $62$ iterations and from $6.1$ to
$32.8\,\mathrm{min}$ for \gls{CT}, and from $80$ to
$117$ iterations and from $35.4$ to
$170.3\,\mathrm{min}$ for \gls{PM}. 
Relative to \gls{pwbb} at Q=16, \gls{ldbb} reaches $55.9\%$ coverage at
iteration $29$ after $5.9\,\mathrm{min}$ for \gls{CT}, and $56.0\%$ at
iteration $161$ after $114.4\,\mathrm{min}$ for \gls{PM}. On \gls{PM},
\gls{ldbb} therefore requires more iterations than \gls{pwbb} at $Q=16$
but roughly half the time per iteration
($0.71$ versus $1.46\,\mathrm{min}$), consistent with the reduced
\gls{psd}-block count reported below.

\begin{figure*}[t]
    \centering
    \begin{minipage}[b]{0.49\textwidth}
        \centering
        \resizebox{\linewidth}{!}{%
        \input{figures/experiment/coordturn/CT_synthesis_convergence.tex}%
        }
        \small (a) \gls{CT}
    \end{minipage}
    % \hfill
    \begin{minipage}[b]{0.49\textwidth}
        \centering
        \resizebox{\linewidth}{!}{%
        \input{figures/experiment/planar_multirotor/PM_synthesis_convergence}%
        }
        \small (b) \gls{PM}
    \end{minipage}
        \caption{Estimated certified coverage versus cumulative synthesis time for
\gls{pwbb} at $Q\in\{4,8,16\}$ and \gls{ldbb}. Dashed vertical lines mark
empirical convergence, labeled with the iteration number. Curves end at
the iteration limit or on an infeasible subproblem (see Algo.~\ref{alg:alternation_v2}).} 
    \label{fig:bb_synthesis_convergence}
\end{figure*}

\begin{figure*}[t]
    \centering
    \begin{minipage}[b]{0.24\textwidth}
        \centering
            \includegraphics[width=\linewidth]{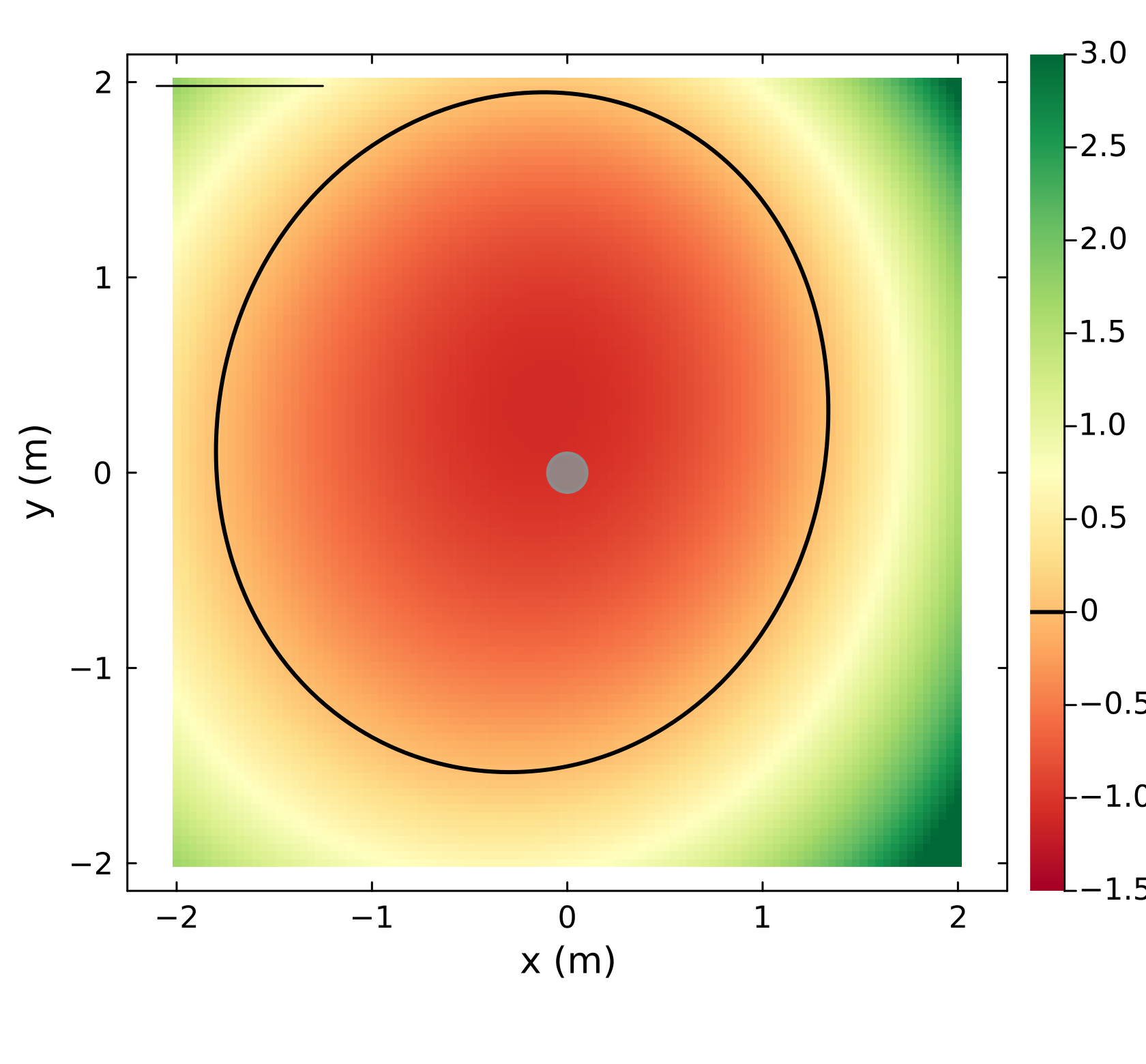}
        \small (a) \gls{CT}: \gls{pwbb} 
    \end{minipage}
    \hfill
    \begin{minipage}[b]{0.24\textwidth}
        \centering
        \includegraphics[width=\linewidth]{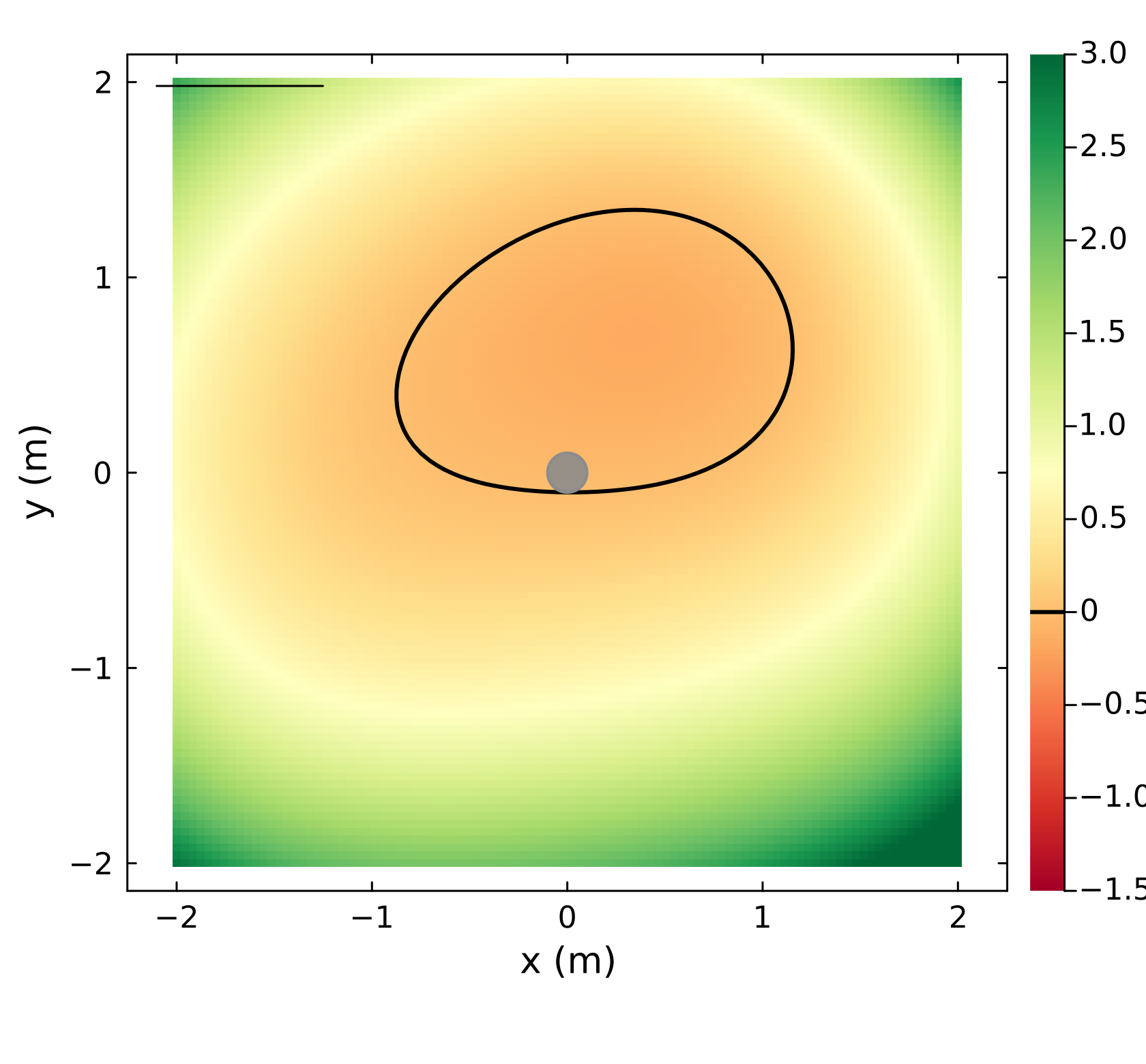}
        \small (b) \gls{CT}: \gls{ldbb}
    \end{minipage}
    \hfill
    \begin{minipage}[b]{0.24\textwidth}
        \centering
        \includegraphics[width=\linewidth]{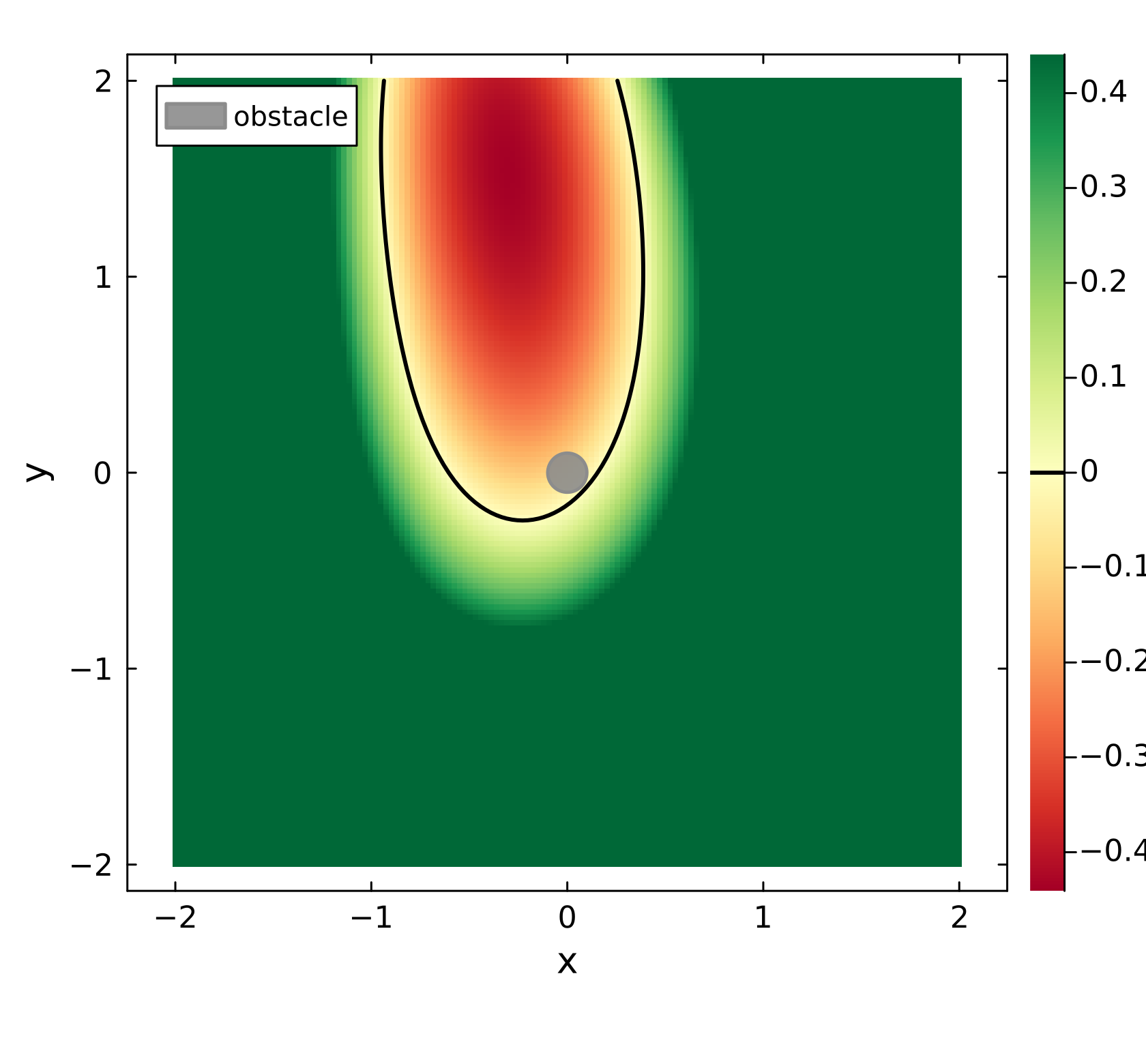}
        \small (c) \gls{PM}: \gls{pwbb}
    \end{minipage}
    \hfill
    \begin{minipage}[b]{0.24\textwidth}
        \centering
        \includegraphics[width=\linewidth]{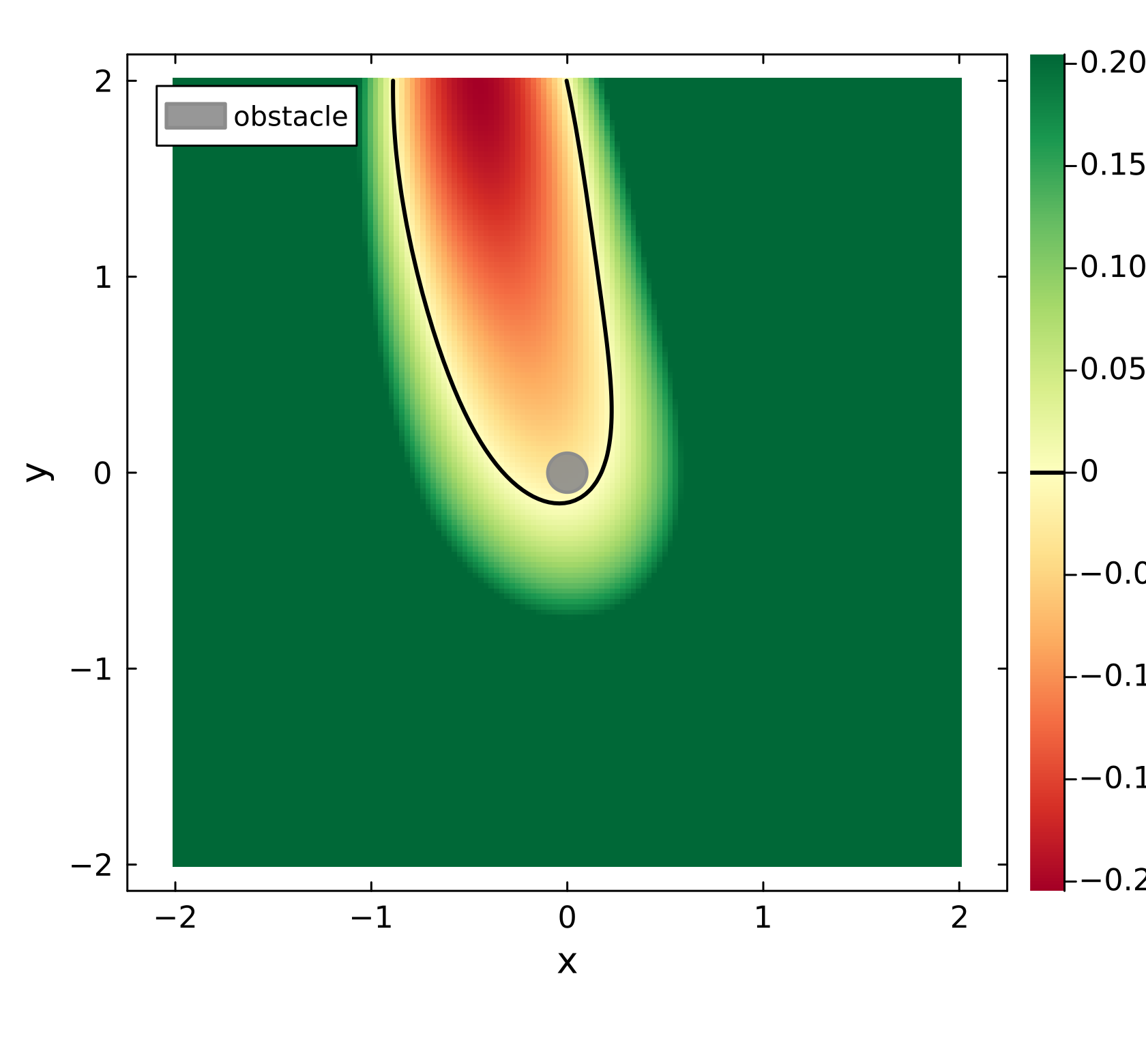}
        \small (d) \gls{PM}: \gls{ldbb}
    \end{minipage}

    % \caption{Representative barrier slices for \gls{pwbb} and
    % \gls{ldbb}. The gray region denotes the obstacle and the
    % black curve denotes $B(x)=0$; the certified region satisfies
    % $B(x)\geq0$.}
    \caption{Representative barrier slices for \gls{pwbb} (Q = 16) and
\gls{ldbb}, heatmap of barrier function sliced at $\theta=\pi$ with $v=0.3,\omega=0$ for
CT and $v_x=0.3,v_y=\omega=0$ for PM. The gray region denotes the
obstacle and the black curve denotes $B(x)=0$; the certified region
satisfies $B(x)\geq0$.}
    \label{fig:bb_slices}
\end{figure*}

In comparison with the finest tested \gls{pwbb} resolution ($Q =16$), \gls{ldbb} achieves higher coverage with lower synthesis time on both systems.
To complement these 
results, \figref{fig:bb_slices} compares representative
$x$-$y$ slices of the synthesized barriers. \gls{ldbb} shows a larger certified area than \gls{pwbb}.

Peak process memory also increases with \gls{pwbb} sector
resolution, from $2.32$ to $3.36\,\mathrm{GiB}$ for \gls{CT}
and from $4.58$ to $7.79\,\mathrm{GiB}$ for \gls{PM} as
$Q$ increases from $4$ to $16$. The corresponding
\gls{ldbb} values are $3.23\,\mathrm{GiB}$ and
$4.49\,\mathrm{GiB}$, respectively.

To compare the resulting \gls{sdp} structures, we consider the
barrier step at the synthesis polynomial degrees. \gls{ldbb}
increases the state dimension from \(5\) to \(6\) for \gls{CT}
and from \(6\) to \(7\) for \gls{PM}, increasing the degree-\(4\)
barrier basis from \(126\) to \(210\) and from \(210\) to \(330\)
monomials, respectively. In contrast, \gls{pwbb} certifies four
approximation-error vertices per sector, giving \(16\) and \(64\)
invariance conditions for \(Q=4\) and \(Q=16\), whereas
\gls{ldbb} uses a single algebraically constrained condition.
The resulting \gls{psd}-block count increases from \(402\) at
\(Q=4\) to \(1314\) at \(Q=16\) for \gls{CT}, and from \(468\)
to \(1524\) for \gls{PM}. In comparison, \gls{ldbb} requires
\(135\) and \(156\) blocks for \gls{CT} and \gls{PM},
respectively.

% \FloatBarrier

\paragraph{Closed-Loop Results}
Here we evaluate the
target-tracking safety filter of
~\secref{sec:zoh_safety_filter}.
The filter sampling time is  
$T_s=0.05\,\mathrm{s}$, and the computed input is held
constant between updates. The original nonlinear dynamics are
simulated with RK4 using a $0.01\,\mathrm{s}$ step.
% For the \gls{CT} and \gls{PM}, a 
A trajectory terminates upon reaching
the goal or after $30\,\mathrm{s}$  for \gls{CT} and $10\,\mathrm{s}$ for \gls{PM}. Goal reaching requires
$\lVert p-p_{\mathrm g}\rVert_2\leq0.05\,\mathrm{m}$, $|\omega|\leq0.1\,\mathrm{rad\,s^{-1}}$
with speed $|v| \leq 0.1\,\mathrm{m\,s^{-1}}$ for \gls{CT} and for $|v_x|  \leq 0.1\,\mathrm{m\,s^{-1}}, |v_y| \leq 0.1\,\mathrm{m\,s^{-1}}$ for \gls{PM}.

For comparison between~\gls{ldbb} and~\gls{pwbb} ($Q = 16$), we generate $500$ start-goal
pairs uniformly over the full $2\pi$ range of approach
directions, with starts and goals placed on opposite sides of
the obstacle such that the nominal trajectories intersect it (see~\figref{fig:bb_rollouts}).
From $500$ candidate
start points, $500$ and $443$ are common certified points for the \gls{CT} and \gls{PM} models, respectively. Both methods are evaluated on common trajectories with the same starts, goals, and nominal controller described
in~\appref{app:nominal_control}; for \gls{PM}, this controller uses a cascaded position--attitude structure.
% , and all 
All retained trajectory pairs
are used for the quantitative comparison. For visual
clarity, \figref{fig:bb_rollouts} shows only six representative
pairs, with the corresponding nominal trajectories.

For each synthesized certificate, we first compute the \gls{sos}-certified
upper bound \(\bar\lambda\) on the synthesis multiplier
using~\eqref{eq:ldbb_lambda_bound} for \gls{ldbb}
and~\eqref{eq:pwbb_lambda_bound} for \gls{pwbb}.

By~\eqref{eq:gamma_lower},
this value is the lower bound on the admissible runtime rate, giving
$2.7968/0.6529\,\mathrm{s}^{-1}$ for
\gls{pwbb}/\gls{ldbb} on \gls{CT} and
$1.2265/3.0615\,\mathrm{s}^{-1}$ on \gls{PM}.
For minimal-intervention controller design, we choose
\(\gamma=\gamma_{\max}=19.98\,\mathrm{s}^{-1}\), above these
lower bounds and slightly below the sampled-data upper bound
\(1/T_s=20\,\mathrm{s}^{-1}\) to retain a small numerical margin. A larger \(\gamma\) relaxes the barrier condition
in the interior of the certified set, allowing greater freedom
to follow the nominal controller before approaching the
boundary.

For the simulation, \(\eta_k\) in~\eqref{eq:zoh_eta} is evaluated locally by deterministic global search with multistart refinement and a small numerical inflation; the resulting bound is numerical rather than certified.
For qualitative comparison, trajectories using the synthesized
\(\lambda(x)\) and the fixed rate \(\gamma_{\max}\) are shown
in \figref{fig:bb_rollouts}. Both controllers avoid the obstacle, but the \(\gamma_{\max}\)-based
controller is less restrictive and passes closer to the obstacle boundary.
The \(\lambda(x)\)-based controller is more conservative because a smaller
rate is the stricter requirement, and wherever \(\lambda(x)<0\) the
condition becomes \(\dot B\geq|\lambda(x)|B>0\), forcing the barrier to
increase.

% ============================================================
% Four-panel trajectory figure assembler
%
% Main-document preamble requirements:
%   \usepackage{graphicx}
%   \usepackage{pgfplots}
%   \usepackage{pgfplotstable}
%   \usetikzlibrary{backgrounds}
%   \pgfplotsset{compat=1.18}
% ============================================================

% trajectory_template_fixed2.tex
% Required: \usepackage{pgfplots}
%           \usepackage{pgfplotstable}
%           \usetikzlibrary{backgrounds}
% Filled five-pointed star (pentagram) plot mark.
% Geometry matches \ngram{R}{5}{...}: inner radius = R*sin(90-72)/sin(90+36)
%   = R*sin(18)/sin(126) = 0.381966*R.
% pgfplots' built-in mark=star is an asterisk and ignores fill.
\pgfdeclareplotmark{ngramstar}{%
  \pgfpathmoveto{\pgfqpoint{0pt}{\pgfplotmarksize}}%
  \pgfpathlineto{\pgfqpoint{0.224514\pgfplotmarksize}{0.309017\pgfplotmarksize}}%
  \pgfpathlineto{\pgfqpoint{0.951057\pgfplotmarksize}{0.309017\pgfplotmarksize}}%
  \pgfpathlineto{\pgfqpoint{0.363271\pgfplotmarksize}{-0.118034\pgfplotmarksize}}%
  \pgfpathlineto{\pgfqpoint{0.587785\pgfplotmarksize}{-0.809017\pgfplotmarksize}}%
  \pgfpathlineto{\pgfqpoint{0pt}{-0.381966\pgfplotmarksize}}%
  \pgfpathlineto{\pgfqpoint{-0.587785\pgfplotmarksize}{-0.809017\pgfplotmarksize}}%
  \pgfpathlineto{\pgfqpoint{-0.363271\pgfplotmarksize}{-0.118034\pgfplotmarksize}}%
  \pgfpathlineto{\pgfqpoint{-0.951057\pgfplotmarksize}{0.309017\pgfplotmarksize}}%
  \pgfpathlineto{\pgfqpoint{-0.224514\pgfplotmarksize}{0.309017\pgfplotmarksize}}%
  \pgfpathclose%
  \pgfusepathqfillstroke%
}

\definecolor{trajone}{rgb}{0.1216,0.4667,0.7059}
\definecolor{trajtwo}{rgb}{1.0000,0.4980,0.0549}
\definecolor{trajthree}{rgb}{0.1725,0.6275,0.1725}
\definecolor{trajfour}{rgb}{0.8392,0.1529,0.1569}
\definecolor{trajfive}{rgb}{0.5804,0.4039,0.7412}
\definecolor{trajsix}{rgb}{0.5490,0.3373,0.2941}

\pgfplotsset{
trajectory axis/.style={
point meta max={nan},
point meta min={nan},
filter discard warning=false,
legend cell align={left},
legend columns={4},
legend style={color=black,draw opacity=1,line width=1,solid,fill=white,fill opacity=1,text opacity=1,font={\fontsize{25pt}{23.4pt}\selectfont},text=black,cells={anchor=center},at={(0.5,1.02)},anchor=south},
axis background/.style={fill=white,opacity=1},
anchor=north west,
xshift=1mm,
yshift=5mm,
width=150.4mm,
height=169.1mm,
scaled x ticks=false,
xlabel={$x (m)$},
xlabel style={at={(ticklabel cs:0.5)},anchor=near ticklabel,font={\fontsize{20pt}{15.6pt}\selectfont},color=black},
xmajorgrids=true,
xmin=-2.3,
xmax=2.3,
xtick={-2,-1,0,1,2},
xticklabels={{$-2$,$-1$,$0$,$1$,$2$}},
xtick align=inside,
xticklabel style={font={\fontsize{20pt}{23.4pt}\selectfont},color=black},
x grid style={color=black,draw opacity=0.25,line width=0.5,solid},
x axis line style={color=black,draw opacity=1,line width=1,solid},
scaled y ticks=false,
ylabel={$y (m)$},
ylabel style={at={(ticklabel cs:0.5)},anchor=near ticklabel,font={\fontsize{20pt}{15.6pt}\selectfont},color=black},
ymajorgrids=true,
ymin=-2.3,
ymax=2.3,
ytick={-2,-1,0,1,2},
yticklabels={{$-2$,$-1$,$0$,$1$,$2$}},
ytick align=inside,
yticklabel style={font={\fontsize{20pt}{23.4pt}\selectfont},color=black},
y grid style={color=black,draw opacity=0.25,line width=0.5,solid},
y axis line style={color=black,draw opacity=1,line width=1,solid},
colorbar=false
},
gamma trajectory/.style={draw opacity=0.85,line width=2.2,solid},
lambda trajectory/.style={draw opacity=0.75,line width=1.1,dashed},
nominal trajectory/.style={color=black,draw opacity=0.6,line width=1.1,dashdotted}
}

% #1 csv, #2 series_id, #3 rollout_idx, #4 color, #5 style
\newcommand{\AddFilteredTrajectory}[5]{%
\addplot[color=#4,unbounded coords=jump,#5] table[col sep=comma,x expr={((\thisrow{series_id}==#2)&&(\thisrow{rollout_idx}==#3)) ? \thisrow{x} : nan},y expr={((\thisrow{series_id}==#2)&&(\thisrow{rollout_idx}==#3)) ? \thisrow{y} : nan}]{#1};%
}
% \newcommand{\AddFilteredTrajectory}[5]{%
% \addplot[
%     color=#4,
%     unbounded coords=jump,
%     restrict expr to domain={\thisrow{series_id}}{#2:#2},
%     restrict expr to domain={\thisrow{rollout_idx}}{#3:#3},
%     #5
% ]
% table[
%     col sep=comma,
%     x=x,
%     y=y
% ]{#1};%
% }

% #1,#2 start; #3,#4 goal; #5 color
% \newcommand{\AddStartGoalMarkers}[5]{%
% \addplot[only marks,mark=*,mark size=3.75pt,mark options={color=black,fill=#5,line width=0.75pt},forget plot] coordinates {(#1,#2)};%
% \addplot[only marks,mark=star,mark size=5.25pt,mark options={color=black,fill=#5,line width=0.75pt},forget plot] coordinates {(#3,#4)};%
% }
% #1,#2 start; #3,#4 goal; #5 color
% #1,#2 start; #3,#4 goal; #5 color
\newcommand{\AddStartGoalMarkers}[5]{%
\addplot[only marks,mark=*,mark size=5.5pt,mark options={draw=black,fill=#5,line width=1pt},forget plot] coordinates {(#1,#2)};%
\addplot[only marks,mark=ngramstar,mark size=7.75pt,mark options={draw=black,fill=#5,line width=1pt},forget plot] coordinates {(#3,#4)};%
}

\newcommand{\TrajectoryPlot}[1]{%
\begin{tikzpicture}[/tikz/background rectangle/.style={fill=white},show background rectangle]%
\begin{axis}[trajectory axis]%
\addplot[color={rgb,1:red,0.55;green,0.55;blue,0.55},area legend,fill={rgb,1:red,0.55;green,0.55;blue,0.55},fill opacity=0.9,draw opacity=1,line width=1.6,solid,domain=0:360,samples=65] ({0.1*cos(x)},{0.1*sin(x)}) \closedcycle;%
\addlegendentry{obstacle \quad}%
\AddFilteredTrajectory{#1}{0}{1}{trajone}{gamma trajectory}%
\addlegendentry{$\gamma_{max}$ \quad}%
\AddFilteredTrajectory{#1}{0}{2}{trajtwo}{gamma trajectory,forget plot}%
\AddFilteredTrajectory{#1}{0}{3}{trajthree}{gamma trajectory,forget plot}%
\AddFilteredTrajectory{#1}{0}{4}{trajfour}{gamma trajectory,forget plot}%
\AddFilteredTrajectory{#1}{0}{5}{trajfive}{gamma trajectory,forget plot}%
\AddFilteredTrajectory{#1}{0}{6}{trajsix}{gamma trajectory,forget plot}%
\AddFilteredTrajectory{#1}{1}{1}{trajone}{lambda trajectory}%
\addlegendentry{$\lambda(x)$ \quad}%
\AddFilteredTrajectory{#1}{1}{2}{trajtwo}{lambda trajectory,forget plot}%
\AddFilteredTrajectory{#1}{1}{3}{trajthree}{lambda trajectory,forget plot}%
\AddFilteredTrajectory{#1}{1}{4}{trajfour}{lambda trajectory,forget plot}%
\AddFilteredTrajectory{#1}{1}{5}{trajfive}{lambda trajectory,forget plot}%
\AddFilteredTrajectory{#1}{1}{6}{trajsix}{lambda trajectory,forget plot}%
\AddFilteredTrajectory{#1}{2}{1}{black}{nominal trajectory}%
\addlegendentry{$x_{nom}$}%
\AddFilteredTrajectory{#1}{2}{2}{black}{nominal trajectory,forget plot}%
\AddFilteredTrajectory{#1}{2}{3}{black}{nominal trajectory,forget plot}%
\AddFilteredTrajectory{#1}{2}{4}{black}{nominal trajectory,forget plot}%
\AddFilteredTrajectory{#1}{2}{5}{black}{nominal trajectory,forget plot}%
\AddFilteredTrajectory{#1}{2}{6}{black}{nominal trajectory,forget plot}%
\AddStartGoalMarkers{1.8}{0.0}{-1.4400000000000002}{0.0}{trajone}%
\AddStartGoalMarkers{0.9000000000000002}{1.5588457268119895}{-0.7200000000000002}{-1.2470765814495917}{trajtwo}%
\AddStartGoalMarkers{-0.8999999999999996}{1.5588457268119897}{0.7199999999999998}{-1.2470765814495919}{trajthree}%
\AddStartGoalMarkers{-1.8}{2.2043642384652358e-16}{1.4400000000000002}{-1.7634913907721887e-16}{trajfour}%
\AddStartGoalMarkers{-0.9000000000000008}{-1.5588457268119893}{0.7200000000000006}{1.2470765814495914}{trajfive}%
\AddStartGoalMarkers{0.9000000000000002}{-1.5588457268119895}{-0.7200000000000002}{1.2470765814495917}{trajsix}%
\end{axis}%
\end{tikzpicture}%
}

% Data files. Change only these four paths if the CSV names change.
\newcommand{\CTPWBBTrajectoryData}{figures/experiment/coordturn/csv_data/pwbb_xy.csv}
\newcommand{\CTLDBBTrajectoryData}{figures/experiment/coordturn/csv_data/ldbb_xy.csv}
\newcommand{\PMPWBBTrajectoryData}{figures/experiment/planar_multirotor/csv_data/pwbb_xy.csv}
\newcommand{\PMLDBBTrajectoryData}{figures/experiment/planar_multirotor/csv_data/ldbb_xy.csv}

\begin{figure}[t]
    \centering
    \begin{minipage}[t]{0.24\textwidth}
        \centering
        \resizebox{\linewidth}{!}{%
            \TrajectoryPlot{\CTPWBBTrajectoryData}%
        }
        \small (a) \gls{CT}: \gls{pwbb}
    \end{minipage}
    \hfill
    \begin{minipage}[t]{0.24\textwidth}
        \centering
        \resizebox{\linewidth}{!}{%
            \TrajectoryPlot{\CTLDBBTrajectoryData}%
        }
        \small (b) \gls{CT}: \gls{ldbb}
    \end{minipage}

    \vspace{2mm}

    \begin{minipage}[t]{0.24\textwidth}
        \centering
        \resizebox{\linewidth}{!}{%
            \TrajectoryPlot{\PMPWBBTrajectoryData}%
        }
        \small (c) \gls{PM}: \gls{pwbb}
    \end{minipage}
    \hfill
    \begin{minipage}[t]{0.24\textwidth}
        \centering
        \resizebox{\linewidth}{!}{%
            \TrajectoryPlot{\PMLDBBTrajectoryData}%
        }
        \small (d) \gls{PM}: \gls{ldbb}
    \end{minipage}

    \caption{Representative matched closed-loop trajectories for
    \gls{pwbb} and \gls{ldbb}. Filled circles indicate initial
    states and stars indicate goals. Solid colored curves denote the
    fixed-$\gamma_{\max}$ safety filter, dashed colored curves
    denote direct use of the synthesized state-dependent
    $\lambda(x)$, and black dash-dotted curves denote the nominal
    trajectories. The gray region represents the obstacle.}
    \label{fig:bb_rollouts}
\end{figure}

Under the fixed-$\gamma_{\max}$ filter, the \gls{qp} remains feasible and both methods remain
safe.
% in all matched trajectories.
For~\gls{CT}, \gls{pwbb} reaches the goal in $92$ of $500$
trajectories, whereas \gls{ldbb} reaches it in all $500$. For
\gls{PM}, \gls{pwbb} and \gls{ldbb} reach the goal in $366$
and $386$ of the $443$ matched trajectories, respectively.

\tabref{tab:bb_closed_loop_metrics} reports the median
$[Q_1,Q_3]$ of the closed-loop metrics. 
\begin{table}[t]
    \centering
\caption{Closed-loop performance under the
fixed-$\gamma_{\max}$ safety filter. Values are medians
$[Q_1,Q_3]$. $T_{\mathrm{reach}}$ is evaluated only over
successful trajectories for each method.}
    \label{tab:bb_closed_loop_metrics}
    \scriptsize
    \setlength{\tabcolsep}{2.5pt}

    \textbf{(a) \gls{CT}}

    \vspace{1mm}
    \begin{tabular}{@{}lcc@{}}
        \toprule
        Metric & \gls{pwbb} & \gls{ldbb} \\
        \midrule
        $J_{\mathrm{int},2}$
            & $19.62\,[17.05,26.20]$
            & $4.323\,[3.728,5.152]$ \\
        $J_g$
            & $58.61\,[44.17,64.47]$
            & $25.24\,[25.04,25.29]$ \\
        $T_{\mathrm{reach}}$ (s)
            & $17.93\,[16.70,24.35]$
            & $15.60\,[15.40,15.80]$ \\
        $d_{\min}$ (m)
            & $0.3117\,[0.2570,0.4127]$
            & $0.001074\,[0.001033,0.001211]$ \\
        \bottomrule
    \end{tabular}

    \vspace{2mm}

    \textbf{(b) \gls{PM}}
    
    \vspace{1mm}
    \begin{tabular}{@{}lcc@{}}
        \toprule
        Metric & \gls{pwbb} & \gls{ldbb} \\
        \midrule
        $J_{\mathrm{int},2}$
            & $0.7702\,[0.5228,1.360]$
            & $0.5947\,[0.3471,0.8677]$ \\
        $J_g$
            & $8.306\,[7.293,8.952]$
            & $8.317\,[7.333,8.526]$ \\
        $T_{\mathrm{reach}}$ (s)
            & $8.150\,[7.613,8.400]$
            & $8.150\,[7.750,8.350]$ \\
        $d_{\min}$ (m)
            & $0.1689\,[0.1002,0.2013]$
            & $0.09145\,[0.07877,0.1145]$ \\
        \bottomrule
    \end{tabular}
    \vspace{1mm}
    % \textbf{(b) \gls{PM}}

    % \vspace{1mm}
    % \begin{tabular}{@{}lcc@{}}
    %     \toprule
    %     Metric & \gls{pwbb} & \gls{ldbb} \\
    %     \midrule
    %     $J_{\mathrm{int},2}$
    %         & $0.7704\,[0.5228,1.360]$
    %         & $0.6416\,[0.4090,0.9179]$ \\
    %     $J_g$
    %         & $8.306\,[7.293,8.953]$
    %         & $8.337\,[7.335,8.536]$ \\
    %     $T_{\mathrm{reach}}$ (s)
    %         & $8.150\,[7.613,8.400]$
    %         & $8.200\,[7.700,8.350]$ \\
    %     $d_{\min}$ (m)
    %         & $0.1689\,[0.1002,0.2013]$
    %         & $0.1010\,[0.08246,0.1062]$ \\
    %     \bottomrule
    % \end{tabular}
    % \vspace{1mm}

    % \parbox{0.96\columnwidth}{%
    %     \scriptsize
    %     The common-success subsets used for
    %     $T_{\mathrm{reach}}$ contain $92$ trajectory pairs for
    %     \gls{CT} and $361$ for \gls{PM}.
    % }
\end{table}
The metrics
$J_{\mathrm{int},2}$, $J_g$, and $d_{\min}$ are evaluated
over all matched trajectories, whereas $T_{\mathrm{reach}}$ is
evaluated over successful trajectories for each method. \gls{ldbb} yields lower $J_{\mathrm{int},2}$ than
\gls{pwbb} on both systems. For \gls{CT}, it also yields lower
$J_g$ and $T_{\mathrm{reach}}$, whereas for \gls{PM} these
two metrics remain comparable between the methods. In both systems, \gls{ldbb} yields a lower median
$d_{\min}$ together with a narrower interquartile range,
indicating closer and more consistent obstacle clearance
across the evaluated approach directions.

Across all matched fixed-$\gamma_{\max}$ trajectories, the
minimum barrier value evaluated on the dense RK4 integration
grid remains positive for every method-system pair.
\figref{fig:bb_barrier_time} shows the evaluated barrier
value for six trajectories, and throughout these trajectories $B(x(t))$ remains nonnegative.
% \input{figures/experiment/barrier_vs_t}
% ============================================================
% barrier_vs_t.tex -- four-panel barrier-time assembler
% ============================================================

\definecolor{barone}{rgb}{0.1216,0.4667,0.7059}
\definecolor{bartwo}{rgb}{1.0000,0.4980,0.0549}
\definecolor{barthree}{rgb}{0.1725,0.6275,0.1725}
\definecolor{barfour}{rgb}{0.8392,0.1529,0.1569}
\definecolor{barfive}{rgb}{0.5804,0.4039,0.7412}
\definecolor{barsix}{rgb}{0.5490,0.3373,0.2941}

\pgfplotsset{
barrier axis/.style={
point meta max={nan},
point meta min={nan},
filter discard warning=false,
legend cell align={left},
legend columns={3},
legend style={color=black,draw opacity=1,line width=1,solid,fill=white,fill opacity=1,text opacity=1,font={\fontsize{25pt}{23.4pt}\selectfont},text=black,cells={anchor=center},at={(0.5,1.02)},anchor=south},
axis background/.style={fill=white,opacity=1},
anchor=north west,
xshift=1mm,
yshift=5mm,
width=150.4mm,
height=105.6mm,
scaled x ticks=false,
xlabel={time (s)},
xlabel style={at={(ticklabel cs:0.5)},anchor=near ticklabel,font={\fontsize{20pt}{15.6pt}\selectfont},color=black},
xmajorgrids=true,
xtick align=inside,
xticklabel style={font={\fontsize{20pt}{23.4pt}\selectfont},color=black},
x grid style={color=black,draw opacity=0.25,line width=0.5,solid},
x axis line style={color=black,draw opacity=1,line width=1,solid},
scaled y ticks=false,
ylabel={$B(x)$},
ylabel style={at={(ticklabel cs:0.5)},anchor=near ticklabel,font={\fontsize{20pt}{15.6pt}\selectfont},color=black},
ymajorgrids=true,
ytick align=inside,
yticklabel style={font={\fontsize{20pt}{23.4pt}\selectfont},color=black},
y grid style={color=black,draw opacity=0.25,line width=0.5,solid},
y axis line style={color=black,draw opacity=1,line width=1,solid},
colorbar=false
},
gamma barrier/.style={draw opacity=0.85,line width=2.2,solid},
lambda barrier/.style={draw opacity=0.75,line width=1.1,dashed},
zero barrier/.style={color=black,draw opacity=1,line width=1,dashed}
}

% #1 csv, #2 series_id, #3 rollout_idx, #4 color, #5 style
\newcommand{\AddFilteredBarrier}[5]{%
\addplot[color=#4,unbounded coords=jump,#5] table[col sep=comma,x expr={((\thisrow{series_id}==#2)&&(\thisrow{rollout_idx}==#3)) ? \thisrow{t} : nan},y expr={((\thisrow{series_id}==#2)&&(\thisrow{rollout_idx}==#3)) ? \thisrow{B} : nan}]{#1};%
}
% \newcommand{\AddFilteredBarrier}[5]{%
% \addplot[
%     color=#4,
%     unbounded coords=jump,
%     restrict expr to domain={\thisrow{series_id}}{#2:#2},
%     restrict expr to domain={\thisrow{rollout_idx}}{#3:#3},
%     #5
% ]
% table[
%     col sep=comma,
%     x=t,
%     y=B
% ]{#1};%
% }

% Optional argument is used only for panel-specific axis limits/ticks.
\newcommand{\BarrierPlot}[2][]{%
\begin{tikzpicture}[/tikz/background rectangle/.style={fill=white},show background rectangle]%
\begin{axis}[barrier axis,#1]%
% gamma_max: original thick solid style
\AddFilteredBarrier{#2}{0}{1}{barone}{gamma barrier}%
% \addlegendentry{$\gamma_{max}$ \quad}%
\AddFilteredBarrier{#2}{0}{2}{bartwo}{gamma barrier,forget plot}%
\AddFilteredBarrier{#2}{0}{3}{barthree}{gamma barrier,forget plot}%
\AddFilteredBarrier{#2}{0}{4}{barfour}{gamma barrier,forget plot}%
\AddFilteredBarrier{#2}{0}{5}{barfive}{gamma barrier,forget plot}%
\AddFilteredBarrier{#2}{0}{6}{barsix}{gamma barrier,forget plot}%
% lambda(x): original thin dashed style
% \AddFilteredBarrier{#2}{1}{1}{barone}{lambda barrier}%
% \addlegendentry{$\lambda(x)$ \quad}%
% \AddFilteredBarrier{#2}{1}{2}{bartwo}{lambda barrier,forget plot}%
% \AddFilteredBarrier{#2}{1}{3}{barthree}{lambda barrier,forget plot}%
% \AddFilteredBarrier{#2}{1}{4}{barfour}{lambda barrier,forget plot}%
% \AddFilteredBarrier{#2}{1}{5}{barfive}{lambda barrier,forget plot}%
% \AddFilteredBarrier{#2}{1}{6}{barsix}{lambda barrier,forget plot}%
% B=0 reference line
\addplot[zero barrier] coordinates {(0,0) (30,0)};%
% \addlegendentry{$B=0$}%
\end{axis}%
\end{tikzpicture}%
}

% 20260823_043800_yaml_reproduction
% 20260823_093703_yaml_reproduction

% Change only these four paths if your CSV names differ.
\newcommand{\CTPWBarrierData}{figures/experiment/coordturn/csv_data/pwbb_bt.csv}
\newcommand{\CTLDBarrierData}{figures/experiment/coordturn/csv_data/ldbb_bt.csv}
\newcommand{\PMPWBarrierData}{figures/experiment/planar_multirotor/csv_data/pwbb_bt.csv}
\newcommand{\PMLDBarrierData}{figures/experiment/planar_multirotor/csv_data/ldbb_bt.csv}

\begin{figure}[t]
    \centering

    \begin{minipage}[t]{0.24\textwidth}
        \centering
        \resizebox{\linewidth}{!}{%
            \BarrierPlot[
                xmin=-0.9,xmax=30.9,
                xtick={0,10,20,30},
                ymin=-0.03,ymax=1.03,
                ytick={0,0.25,0.5,0.75,1},
                yticklabels={{$0.00$,$0.25$,$0.50$,$0.75$,$1.00$}}
            ]{\CTPWBarrierData}%
        }
        \small (a) \gls{CT}: \gls{pwbb}
    \end{minipage}
    \hfill
    \begin{minipage}[t]{0.24\textwidth}
        \centering
        \resizebox{\linewidth}{!}{%
            \BarrierPlot[
                xmin=-0.9,xmax=30.9,
                xtick={0,10,20,30},
                ymin=-0.03,ymax=1.03,
                ytick={0,0.25,0.5,0.75,1},
                yticklabels={{$0.00$,$0.25$,$0.50$,$0.75$,$1.00$}}
            ]{\CTLDBarrierData}%
        }
        \small (b) \gls{CT}: \gls{ldbb}
    \end{minipage}
    \vspace{2mm}

    \vfill

    \begin{minipage}[t]{0.24\textwidth}
        \centering
        \resizebox{\linewidth}{!}{%
            \BarrierPlot[
                xmin=-0.3,xmax=10.3,
                xtick={0,2.5,5,7.5,10},
                xticklabels={{$0.0$,$2.5$,$5.0$,$7.5$,$10.0$}},
                ymin=-0.10887833567921246,
                ymax=3.7381561916529584,
                ytick={0,1,2,3}
            ]{\PMPWBarrierData}%
        }
        \small (c) \gls{PM}: \gls{pwbb}
    \end{minipage}
    \hfill
    \begin{minipage}[t]{0.24\textwidth}
        \centering
        \resizebox{\linewidth}{!}{%
            \BarrierPlot[
                xmin=-0.3,xmax=10.3,
                xtick={0,2.5,5,7.5,10},
                xticklabels={{$0.0$,$2.5$,$5.0$,$7.5$,$10.0$}},
                ymin=-0.10887833567921246,
                ymax=3.7381561916529584,
                ytick={0,1,2,3}
            ]{\PMLDBarrierData}%
        }
        \small (d) \gls{PM}: \gls{ldbb}
    \end{minipage}

    % \caption{Solid curves show $B(x(t))$ along the densely integrated
    % trajectories under the fixed-$\gamma_{\max}$ safety filter, while
    % dashed colored curves show the corresponding trajectories obtained
    % using the synthesized state-dependent $\lambda(x)$. The horizontal
    % black dashed line denotes $B=0$.}
    \caption{Solid curves show $B(x(t))$ along the densely integrated
    trajectories under the fixed-$\gamma_{\max}$ safety filter; the horizontal
    black dashed line denotes $B=0$.}
    \label{fig:bb_barrier_time}
\end{figure}

Finally, the combined online computation of the local
inter-sample margin~\eqref{eq:zoh_nu} and the \gls{cbf}-\gls{qp} remains small relative to
the $T_s=50\,\mathrm{ms}$ sampling period. After warm-up,
the median computation time ranges from $1.86$ to
$9.16\,\mathrm{ms}$ across the two systems and methods, while
the corresponding $95$th percentiles range from $4.54$ to
$13.36\,\mathrm{ms}$.

\subsection{Successive-Barrier Synthesis Results}
To isolate the effect of model representation, we compare \gls{ldsb} and the \gls{pwsb} baseline across physical synthesis sample sizes 
$n\in\{50,200,500,1000\}$, with two synthesis
levels used in all these experiments. 
Certified coverage is evaluated as defined in
~\appref{app:evaluation_metrics}.
We additionally report
the number of barriers, total synthesis time, and median memory
usage. ~\figref{fig:sb_metric_panels} summarizes these
results.

\begin{figure}[!t]
    \centering
    \definecolor{sweepgray}{gray}{0.32}
\definecolor{sweepblue}{HTML}{2166AC}
\pgfplotsset{
    sweep axis/.style={
        width=\linewidth,
        height=1.05in,
        xmin=40, xmax=1020,
        xtick={50,200,500,1000},
        grid=both,
        major grid style={draw=gray!18},
        minor grid style={draw=gray!8},
        axis lines=left,
        tick align=outside,
        scaled ticks=false,
        tick label style={font=\scriptsize},
        label style={font=\scriptsize},
        title style={font=\scriptsize},
        legend style={font=\tiny, draw=none, fill=none},
        line width=1.1pt,
    },
    pwsb line/.style={
        draw=sweepgray,
        dashed,
        mark=*,
        mark size=0.6pt,
        mark options={solid,fill=sweepgray},
    },
    ldsb line/.style={
        draw=sweepgray,
        solid,
        mark=square*,
        mark size=0.8pt,
        mark options={solid,fill=sweepgray},
    },
    pwsb line pm/.style={
        draw=sweepblue,
        dashed,
        mark=*,
        mark size=0.6pt,
        mark options={solid,fill=sweepblue},
    },
    ldsb line pm/.style={
        draw=sweepblue,
        solid,
        mark=square*,
        mark size=0.8pt,
        mark options={solid,fill=sweepblue},
    },
}

% Data source: experiment/externel_comparsion/results/sweep_table.csv
% ("... via NBInclude" mosek rows; Planar Multirotor uses the v/omega<=2.0/1.0
% capped-bound rows, matching pm_physical_bounds — the uncapped rows are
% degenerate (0% coverage) and are not used here).
\begin{minipage}[t]{0.49\linewidth}
    \centering
    \begin{tikzpicture}
        \begin{axis}[
            sweep axis,
            ylabel={No.\ of barriers},
            ymin=0,
            legend style={at={(0.03,0.97)}, anchor=north west, /tikz/every even column/.append style={column sep=0.3em}},
            legend columns=1,
        ]
            \addplot[pwsb line] coordinates {
                (50,6) (200,10) (500,24) (1000,26)
            };
            % \addlegendentry{PWSB (CoordTurn)}
            \addplot[ldsb line] coordinates {
                (50,3) (200,5) (500,12) (1000,7)
            };
            % \addlegendentry{LDSB (CoordTurn)}
            \addplot[pwsb line pm] coordinates {
                (50,17) (200,16) (500,14) (1000,15)
            };
            % \addlegendentry{PWSB (Multirotor)}
            \addplot[ldsb line pm] coordinates {
                (50,7) (200,6) (500,6) (1000,6)
            };
            % \addlegendentry{LDSB (Multirotor)}
        \end{axis}
    \end{tikzpicture}
\end{minipage}\hfill
\begin{minipage}[t]{0.49\linewidth}
    \centering
    \begin{tikzpicture}
        \begin{axis}[
            sweep axis,
            ylabel={Coverage (\%)},
            ymin=78, ymax=100,
        ]
            \addplot[pwsb line] coordinates {
                (50,80.61) (200,93.53) (500,97.40) (1000,97.66)
            };
            \addplot[ldsb line] coordinates {
                (50,96.28) (200,97.62) (500,98.69) (1000,99.18)
            };
            \addplot[pwsb line pm] coordinates {
                (50,81.05) (200,94.15) (500,95.82) (1000,94.37)
            };
            \addplot[ldsb line pm] coordinates {
                (50,98.91) (200,99.26) (500,99.31) (1000,99.16)
            };
        \end{axis}
    \end{tikzpicture}
\end{minipage}

\vspace{0.55em}

\begin{minipage}[t]{0.49\linewidth}
    \centering
    \begin{tikzpicture}
        % \begin{axis}[
        %     sweep axis,
        %     % title={Time},
        %     ylabel={Time (min)},
        %     xlabel={Sampling size $n$},
        %     ymode=log,
        %     log basis y=10,
        %     log ticks with fixed point,
        %     ymin=0.5,
        %     ymax=10,
        %     ytick={0.5,1,2,3},
        % ]
        \begin{axis}[
    sweep axis,
    ylabel={Time (min)},
    xlabel={Sampling size $n$},
    ymode=log,
    log basis y=10,
    log ticks with fixed point,
    ymin=0.05,
    ymax=5,
    ytick={0.05,0.2,0.5,2,5},
]
            \addplot[pwsb line] coordinates {
                (50,0.66) (200,0.87) (500,1.70) (1000,1.89)
            };
            \addplot[ldsb line] coordinates {
                (50,0.06) (200,0.30) (500,0.42) (1000,0.20)
            };
            \addplot[pwsb line pm] coordinates {
                (50,2.02) (200,1.89) (500,1.72) (1000,1.79)
            };
            \addplot[ldsb line pm] coordinates {
                (50,0.51) (200,0.41) (500,0.41) (1000,0.43)
            };
        \end{axis}
    \end{tikzpicture}
\end{minipage}\hfill
\begin{minipage}[t]{0.49\linewidth}
    \centering
    \begin{tikzpicture}
        \begin{axis}[
            sweep axis,
            ylabel={RAM (GB)},
            xlabel={Sampling size $n$},
            ymin=1.0, ymax=9.0,
        ]
            \addplot[pwsb line] coordinates {
                (50,4.24) (200,4.49) (500,4.91) (1000,4.74)
            };
            \addplot[ldsb line] coordinates {
                (50,2.18) (200,2.56) (500,2.85) (1000,2.88)
            };
            \addplot[pwsb line pm] coordinates {
                (50,7.28) (200,6.76) (500,6.58) (1000,6.81)
            };
            \addplot[ldsb line pm] coordinates {
                (50,3.36) (200,3.75) (500,3.24) (1000,3.26)
            };
        \end{axis}
    \end{tikzpicture}
\end{minipage}
    \caption{\gls{sbs} versus the number of
    physical synthesis samples $n$. The panels report 
    barrier-bank size, coverage, synthesis time on a logarithmic scale, and median memory usage for
    \gls{CT} (gray) and the \gls{PM} (blue).
    \gls{pwsb} is dashed and \gls{ldsb} is solid.}
    \label{fig:sb_metric_panels}
\end{figure}

Across all tested sample sizes, \gls{ldsb} achieves higher estimated certified coverage with fewer barriers than \gls{pwsb}, while
also requiring less solver time and median memory. The largest
coverage differences occur at $n=50$, where coverage
% increases by $15.67\%$ points for \gls{CT} and $17.86\%$ points for the \gls{PM}.
increases from $80.61\%$ to $96.28\%$ for \gls{CT} and
from $81.05\%$ to $98.91\%$ for the \gls{PM}.
At $n=1000$, \gls{ldsb} certifies more than $99\%$ of the
evaluation samples for both models, compared with $97.66\%$
for \gls{CT} and $94.37\%$ for \gls{PM} under \gls{pwsb}.
It uses $7$ and $6$ barriers for \gls{CT} and \gls{PM},
respectively, compared with $26$ and $15$ for \gls{pwsb}.
At the same sample size, the \gls{ldsb}/\gls{pwsb} solver
times are $0.20/1.89\,\mathrm{min}$ for \gls{CT} and
$0.43/1.79\,\mathrm{min}$ for \gls{PM}.

To complement these aggregate results,
~\figref{fig:sb_slices} shows representative slices of the unions of successive-barrier
certified sets, with a visibly larger certified union for \gls{ldsb}
on both benchmarks.
A state is displayed as certified whenever at least one barrier
in the corresponding bank is nonnegative.

\begin{figure}[t]
    \centering
    \begin{minipage}[t]{0.24\textwidth}
        \centering
        \includegraphics[width=\linewidth]{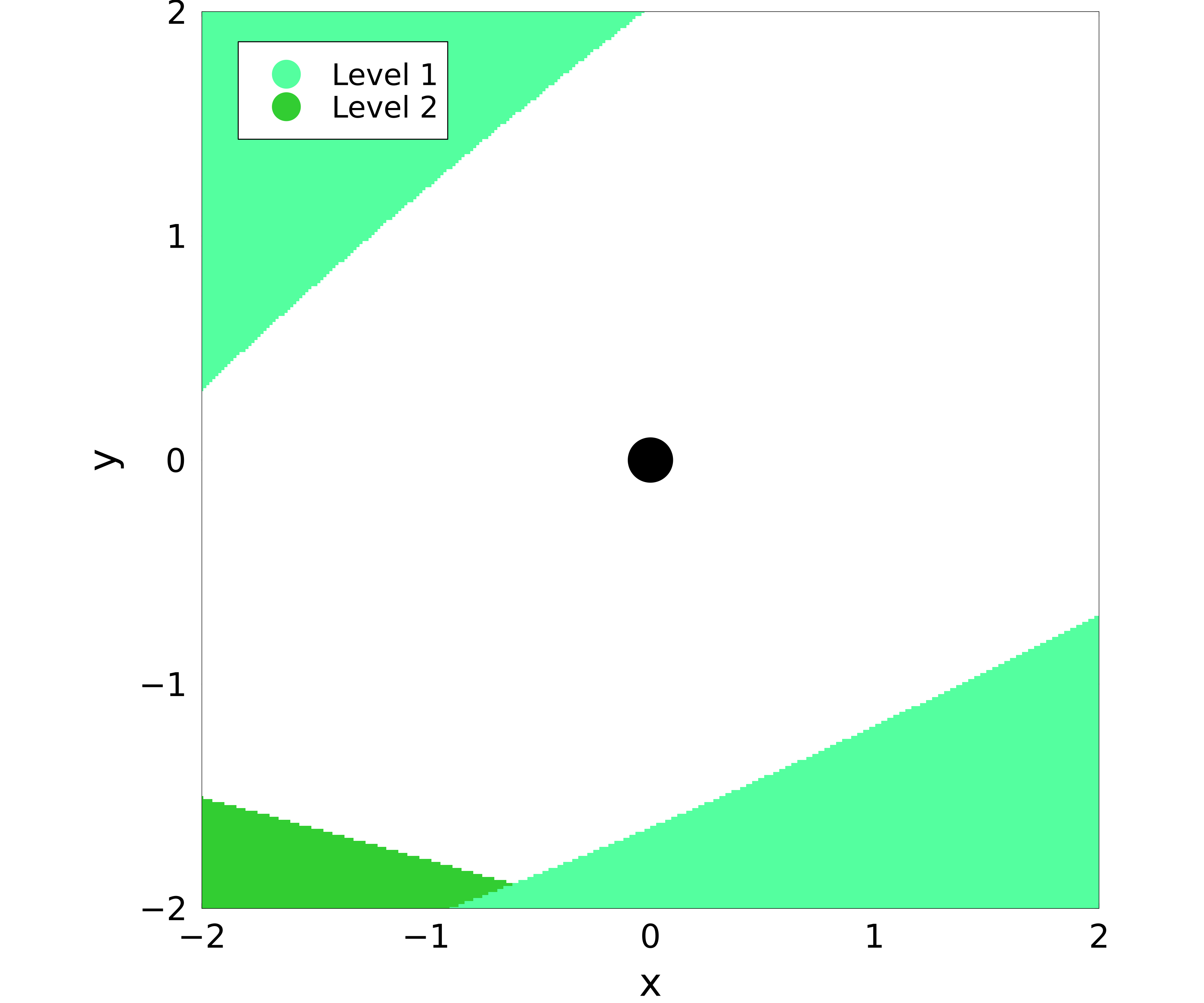}
        \small (a) \gls{CT}: \gls{pwsb}
    \end{minipage}
    \hfill
    \begin{minipage}[t]{0.24\textwidth}
        \centering
        \includegraphics[width=\linewidth]{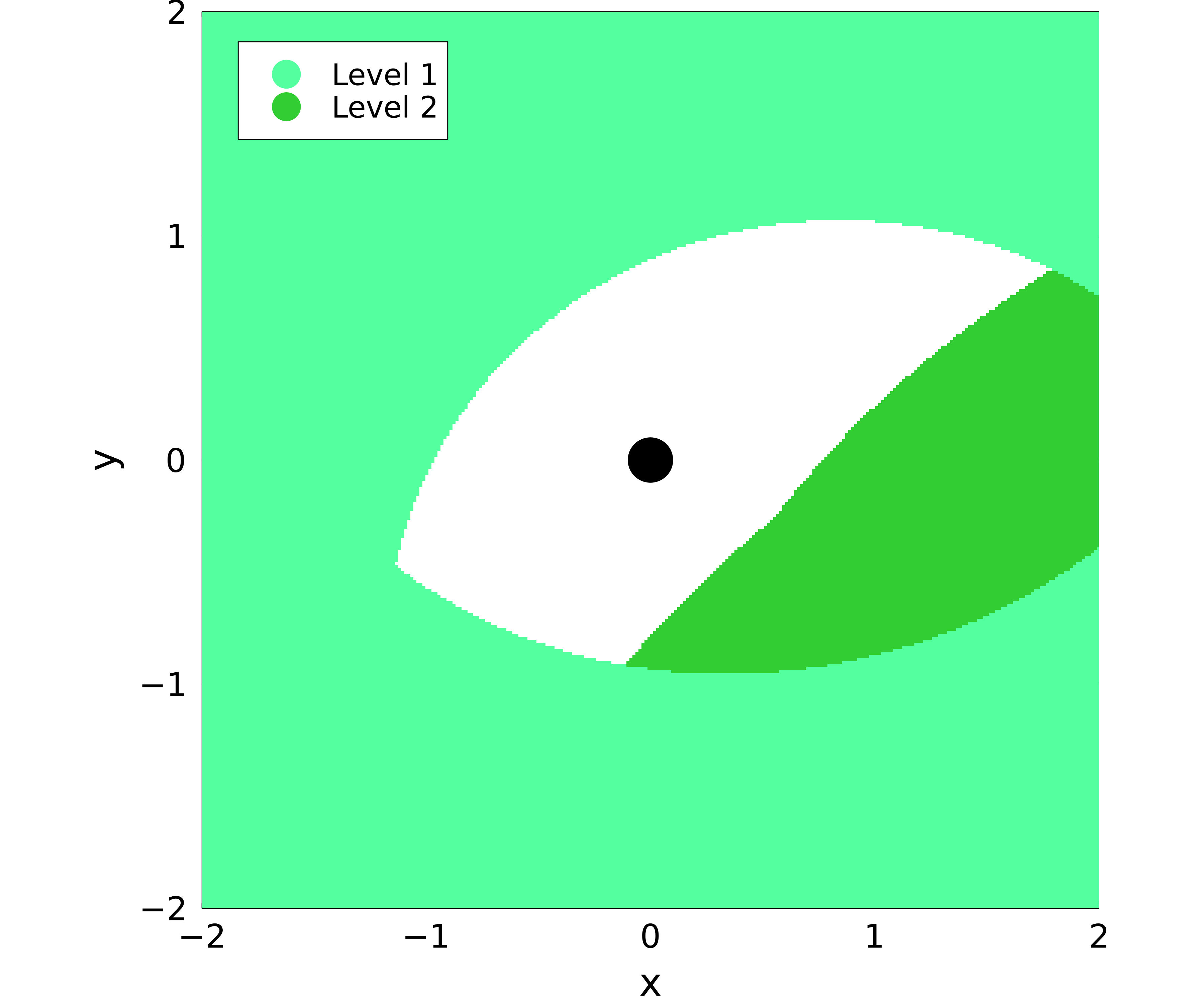}
        \small (b) \gls{CT}: \gls{ldsb}
    \end{minipage}
    \vfill
    \begin{minipage}[t]{0.24\textwidth}
        \centering
        \includegraphics[width=\linewidth]{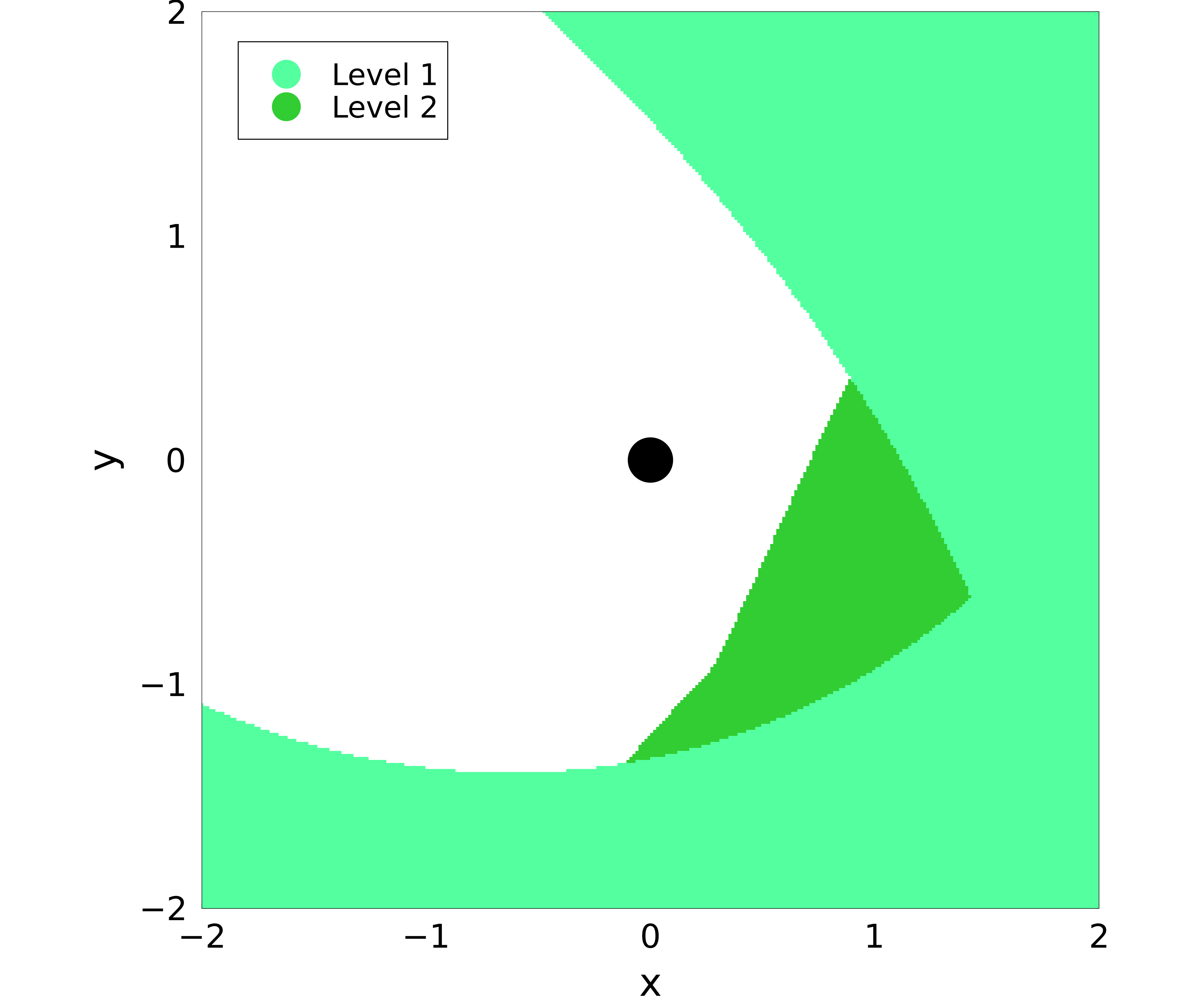}
        \small (c) \gls{PM}: \gls{pwsb}
    \end{minipage}
    \hfill
    \begin{minipage}[t]{0.24\textwidth}
        \centering
        \includegraphics[width=\linewidth]{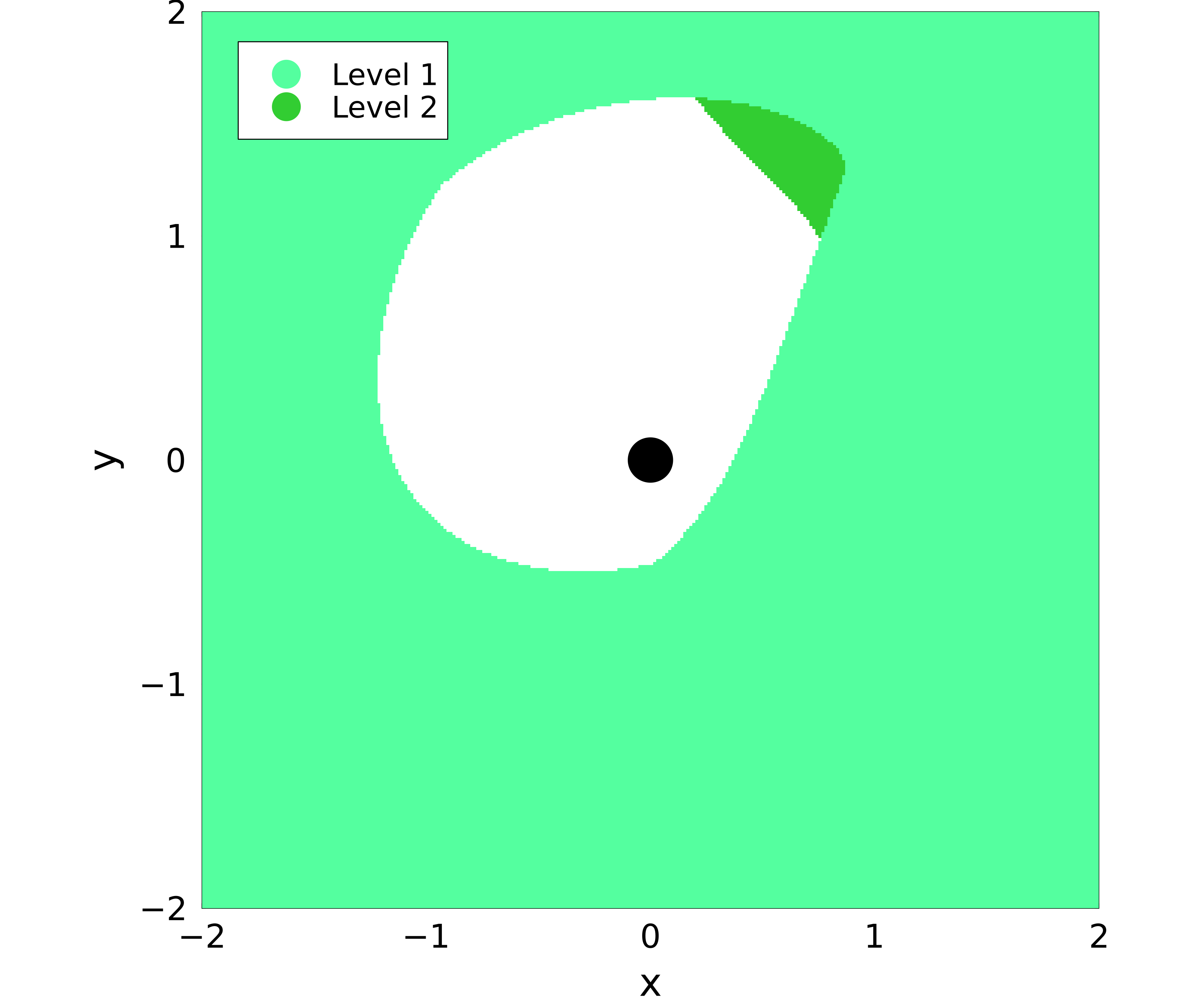}
        \small (d) \gls{PM}: \gls{ldsb}
    \end{minipage}

    \caption{Representative slices of the union of the barrier-certified sets synthesized with $n=1000$. The black disk denotes
    the obstacle, and the displayed certified region contains
    states for which at least one barrier in the bank satisfies
    $B_{\ell,j}(x)\geq0$. }
    \label{fig:sb_slices}
\end{figure}
% \FloatBarrier
\section{Discussion}
\label{sec:discussion}

When considering all these results together, the main finding is that the choice of dynamics representation affects both certified coverage and synthesis cost, across both synthesis methods. In the piecewise formulation, reducing approximation error requires finer sectorization and additional \gls{sos} conditions, whereas the lifted formulation represents the nonlinear dynamics directly through lifted variables and algebraic constraints, preserving the unit-circle geometry. Thus, synthesis cost depends on how the dynamics are encoded, not only on the state dimension. More specifically, for \gls{bbs}, the coverage versus time~\figref{fig:bb_synthesis_convergence} curves show that
\gls{ldbb} accumulates certified coverage more rapidly than
\gls{pwbb} for $Q = 16$. In \gls{sbs}, the more direct representation allows each accepted barrier to certify a larger portion of the sampled domain, so fewer barriers and fewer \gls{sos} solves are required to achieve higher coverage than \gls{pwsb}.

The closed-loop results show that \gls{ldbb} reduces intervention and gives more consistent spatial behavior across approach directions, with the effect most pronounced for \gls{CT}. In \gls{CT}, the sector-wise approximation and error constraints in \gls{pwbb} introduce heading-dependent conservativeness into the synthesized barrier, causing the safety filter to initiate obstacle avoidance at different spatial locations for different approach directions. In contrast, \gls{ldbb} avoids heading sectorization and produces more consistent avoidance trajectories and obstacle clearance. For \gls{PM}, the two representations give similar goal-tracking behavior, while \gls{ldbb} still requires less intervention and yields closer and more consistent clearance. 
% This smaller difference in task-space motion is consistent with the cascaded controller, where attitude acts as an intermediate variable between position tracking and control input.
This smaller task-space difference for \gls{PM} may partly reflect its
second-order translational dynamics and cascaded position--attitude
control, through which differences in safety-filter intervention affect
position only after acting through thrust, attitude, and velocity.

A limitation of the sampled-data filter is the trade-off between tight inter-sample margins and recursive-feasibility guarantees. 
In the closed-loop simulations, the inter-sample margin \(\nu_k\) is
computed from a local numerical estimate of \(\eta_k\), rather than
from a global bound over \(\mathcal C\), reducing conservatism.
% The rollouts use a local numerical estimate of \(\eta_k\) to reduce \gls{qp} tightening and avoid the conservativeness of a global bound.
% The rollouts use a local numerical estimate of \(\eta_k\) to compute
% the inter-sample margin \(\nu_k\), making it less conservative than a
% margin based on a global bound over \(\mathcal C\).
% A recursive-feasibility guarantee would instead require a certified global margin bound together with a sufficient witness margin, generally at the cost of increased conservatism, as discussed in Remark~\ref{rem:zoh_recursive}.
A whole-set recursive-feasibility guarantee would additionally
require a certified lower bound on the witness margin that dominates
the required inter-sample margin throughout the certified set, as
discussed in Remark~\ref{rem:zoh_recursive}.
\section{Conclusions}
\label{sec:conclusion}
Safety filtering for robotic systems requires both a valid barrier certificate and a way to enforce it under \gls{zoh} control. On the synthesis side, non-polynomial robotic
dynamics are not directly compatible with \gls{sos}-based methods.
We address this through exact polynomial lifting. For the
trigonometric systems considered here, lifting replaces sector-wise
polynomial approximations and explicit error bounds with polynomial
lifted dynamics and lifting-induced algebraic constraints. This
removes sector resolution as a design parameter and avoids the
representation trade-off between approximation fidelity and the
number of local certification conditions. Especially for rotational coordinates, the lifted unit-circle representation makes periodic
geometry intrinsic, rather than requiring explicit endpoint-consistency
constraints.

On the deployment side, we establish a fixed runtime barrier-rate bridge from boundary-based synthesis to~\gls{zoh} implementation, ensuring compatibility with the synthesized certificate while accounting for inter-sample evolution. Together, these results extend \gls{sos}-based barrier certification to
robotic systems whose non-polynomial dynamics admit an exact polynomial
lifting, and make the resulting certificate usable as a runtime safety
filter whenever the \gls{qp} remains feasible at each sampling instant.

Future work should incorporate the desired sampling period and corresponding safety margin directly into synthesis, and investigate whether the structural advantages of lifting 
persist for configuration spaces beyond $\mathrm{SO}(2)$, such as $\mathrm{SO}(3)$ or $\mathrm{SE}(3)$.

\appendix

\makeatletter
\renewcommand{\p@subsection}{}
\renewcommand{\p@subsubsection}{\thesubsection-}
\makeatother

% \section{Appendix}
% \label{sec:appendix}

\subsection{Experimental Models}
\label{app:experimental_models}

\subsubsection{Coordinated Turn Model}
\label{app:coordturn_details}

% The piecewise-polynomial and exact lifted representations used in
% the coordinated turn experiments are constructed from the original
% dynamics in~\eqref{eq:ct_original_dynamics}.
Both representations are constructed from the original dynamics in~\eqref{eq:ct_original_dynamics}.

\paragraph{Piecewise-polynomial representation}
For an integer $Q$ divisible by four, the heading domain is
partitioned into $Q$ sectors of equal width,
\begin{equation}
\label{eq:ct_sector_partition}
    \mathcal I_q
    :=
    [\ell_q,r_q],
    \ 
    \ell_q:=-\pi+(q-1)\frac{2\pi}{Q},
    \ 
    r_q:=\ell_q+\frac{2\pi}{Q},
\end{equation}
for $q=1,\ldots,Q$. On each sector, $\cos\theta$ and
$\sin\theta$ are replaced by the sector-wise affine
approximations that interpolate them at the sector endpoints,
$$
    c_q(\theta)
    :=
    \cos\ell_q
    +
    \frac{\cos r_q-\cos\ell_q}{2\pi/Q}
    (\theta-\ell_q),
$$
$$
    s_q(\theta)
    :=
    \sin\ell_q
    +
    \frac{\sin r_q-\sin\ell_q}{2\pi/Q}
    (\theta-\ell_q),
$$
each used only for $\theta\in\mathcal I_q$. The associated
approximation residuals are
$$
    e_{c,q}(\theta):=\cos\theta-c_q(\theta),
    \qquad
    e_{s,q}(\theta):=\sin\theta-s_q(\theta),
$$
and are bounded component-wise on each sector by
\begin{equation}
\label{eq:ct_trig_error_bound}
    \varepsilon_{c,q}
    :=
    \max_{\theta\in\mathcal I_q}
    \bigl|e_{c,q}(\theta)\bigr|,
    \qquad
    \varepsilon_{s,q}
    :=
    \max_{\theta\in\mathcal I_q}
    \bigl|e_{s,q}(\theta)\bigr| .
\end{equation}
Requiring $Q$ to be divisible by four places every inflection
point of $\cos$ and $\sin$ at a sector endpoint. Consequently
no sector contains an inflection point, each residual has constant
curvature sign on $\mathcal I_q$ and vanishes at both endpoints,
and $|e_{c,q}|$ and $|e_{s,q}|$ are therefore unimodal with a
single interior maximum. The bounds
in~\eqref{eq:ct_trig_error_bound} are computed by golden-section
search on each sector and rounded outward.\footnote{%
For $Q=4$ the bound admits a closed form. On $[0,\pi/2]$,
with $\kappa:=2/\pi$, the residual
$e(\theta)=\cos\theta-1+\kappa\theta$ is stationary where
$\sin\theta^\star=\kappa$, giving
$\varepsilon=\sqrt{1-\kappa^2}-\kappa\arccos\kappa
\approx0.2105137$.
Symmetry makes this value common to both components in every
sector.}
Table~\ref{tab:ct_sector_errors} reports the values used in the
experiments.
\begin{table}[t]
    \centering
    \caption{Sector-wise trigonometric approximation-error bounds
    used in the experiments. Sectors listed in the same row share
    identical bounds by symmetry.}
    \label{tab:ct_sector_errors}
    \footnotesize
    \setlength{\tabcolsep}{5pt}
    \begin{tabular}{@{}cccc@{}}
        \toprule
        $Q$ & Sectors $q$
        & $\varepsilon_{c,q}$
        & $\varepsilon_{s,q}$ \\
        \midrule
        4
        & $1\!:\!4$
        & 0.210514 & 0.210514 \\
        \addlinespace
        \multirow{2}{*}{8}
        & $1,4,5,8$
        & 0.070378 & 0.029823 \\
        & $2,3,6,7$
        & 0.029823 & 0.070378 \\
        \addlinespace
        \multirow{4}{*}{16}
        & $1,8,9,16$
        & 0.018846 & 0.003845 \\
        & $2,7,10,15$
        & 0.015984 & 0.010701 \\
        & $3,6,11,14$
        & 0.010701 & 0.015984 \\
        & $4,5,12,13$
        & 0.003845 & 0.018846 \\
        \bottomrule
    \end{tabular}
\end{table}
Using~\eqref{eq:ct_trig_error_bound}, the trigonometric terms are
written on each sector as
\begin{equation}
\label{eq:ct_trig_residuals}
\begin{aligned}
    \cos\theta
        &= c_q(\theta)+\varepsilon_{c,q}\delta_c,\\
    \sin\theta
        &= s_q(\theta)+\varepsilon_{s,q}\delta_s,
\end{aligned}
\end{equation}
where $\delta=[\delta_c,\delta_s]^\top$ collects the
normalized residuals. Since $e_{c,q}$ and $e_{s,q}$ have
constant sign on $\mathcal I_q$, let $e_{\nu,q}$ denote the
corresponding residual for $\nu\in\{c,s\}$. Then
\begin{equation}
\label{eq:ct_error_set}
\delta_\nu\in
\begin{cases}
[0,1],  & e_{\nu,q}\geq0 \text{ on }\mathcal I_q,\\
[-1,0], & e_{\nu,q}\leq0 \text{ on }\mathcal I_q .
\end{cases}
\end{equation}
The Cartesian product of these two intervals is denoted by
$\Delta_q$.

Substitution into~\eqref{eq:ct_original_dynamics} gives the
piecewise-polynomial differential inclusion
\begin{equation}
\label{eq:ct_piecewise_model_appendix}
\dot x=
\begin{bmatrix}
v\!\left(c_q(\theta)+\varepsilon_{c,q}\delta_c\right)\\
v\!\left(s_q(\theta)+\varepsilon_{s,q}\delta_s\right)\\
u_1\\
\omega\\
u_2
\end{bmatrix},
\end{equation}
for $\theta\in\mathcal I_q$, and $\delta\in\Delta_q$.
For each fixed $u$,
\eqref{eq:ct_piecewise_model_appendix} is affine in $\delta$, so
the robust conditions reduce to the four vertices of
$\Delta_q$ in each sector.
\paragraph{Exact lifted representation}
Introduce the auxiliary variables
\begin{equation}
\label{eq:ct_auxiliary_variables}
    c:=\cos\theta,
    \qquad
    s:=\sin\theta .
\end{equation}
After this substitution, $\theta$ does not appear explicitly in
the dynamics, the obstacle specification, or the input
constraints. As in Remark~\ref{rem:reduced_orientation}, we therefore use a
reduced lifting map
\begin{equation}
\label{eq:ct_lifting_map}
    z=\phi(x)
    :=
    [p_x,p_y,v,c,s,\omega]^\top .
\end{equation}
The chain rule gives the auxiliary dynamics
\begin{equation}
\label{eq:ct_auxiliary_dynamics}
    \dot c=-s\omega,
    \qquad
    \dot s=c\omega ,
\end{equation}
so that the exact lifted model is
$$
\dot z=
\begin{bmatrix}
    vc &
    vs &
    u_1 &
    -s\omega &
    c\omega &
    u_2
\end{bmatrix}^\top ,
$$
subject to the lifting-induced algebraic equality
\begin{equation}
\label{eq:ct_lifting_equality}
    \hat h(z)
    :=
    c^2+s^2-1=0 .
\end{equation}
No additional lifting-induced inequality is required; the obstacle and input constraints are embedded unchanged since they depend only on position and control, respectively.

\subsubsection{Planar Multirotor Model}
\label{app:multirotor_details}

% The piecewise-polynomial and exact lifted representations used in
% the planar multirotor experiments are constructed from the
% original dynamics in~\eqref{eq:pm_original_dynamics}.
Both representations are constructed from the original dynamics in~\eqref{eq:pm_original_dynamics}.

\paragraph{Piecewise-polynomial representation}
We reuse the sector partition, the affine approximations
$c_q(\theta)$ and $s_q(\theta)$, the sector-wise bounds
$\varepsilon_{c,q}$ and $\varepsilon_{s,q}$, and the
set $\Delta_q$ introduced in
\eqref{eq:ct_sector_partition}--\eqref{eq:ct_error_set}, since
these depend only on $Q$ and not on the system. Writing the
total thrust as $T:=u_1+u_2$ and
substituting~\eqref{eq:ct_trig_residuals}
into~\eqref{eq:pm_original_dynamics} gives
% We reuse $\mathcal I_q$, $c_q$, $s_q$, $\varepsilon_{c,q}$,
% $\varepsilon_{s,q}$, and $\Delta_q$ from~\eqref{eq:ct_sector_partition}--%
% \eqref{eq:ct_error_set}, which depend only on $Q$.
% With $T := u_1 + u_2$, substituting~\eqref{eq:ct_trig_residuals}
% into~\eqref{eq:pm_original_dynamics} gives
\begin{equation*}
\label{eq:pm_piecewise_model_appendix}
\dot x=
\begin{bmatrix}
    v_x\\
    v_y\\
    T\!\left(s_q(\theta)+\varepsilon_{s,q}\delta_s\right)\\
    T\!\left(c_q(\theta)+\varepsilon_{c,q}\delta_c\right)-g_0\\
    \omega\\
    u_1-u_2
\end{bmatrix},
\end{equation*}
again for $\theta\in\mathcal I_q$, $\delta\in\Delta_q$, and
$q=1,\ldots,Q$. 
Here the approximation uncertainty enters through the thrust-dependent control channel rather than the uncontrolled drift. For fixed \(u\), the model is affine in \(\delta\), so the robust conditions reduce to the four vertices of \(\Delta_q\).

\paragraph{Exact lifted representation}
Introducing $c$ and $s$ as
in~\eqref{eq:ct_auxiliary_variables} again removes $\theta$ from
the dynamics and the safety specification, giving the reduced
lifting map
\begin{equation}
\label{eq:pm_lifting_map}
    z=\phi(x)
    :=
    [p_x,p_y,v_x,v_y,c,s,\omega]^\top ,
\end{equation}
the auxiliary dynamics~\eqref{eq:ct_auxiliary_dynamics}, and the
exact polynomial model
$$
\dot z=
\begin{bmatrix}
    v_x\\
    v_y\\
    (u_1+u_2)s\\
    (u_1+u_2)c-g_0\\
    -s\omega\\
    c\omega\\
    u_1-u_2
\end{bmatrix},
$$
subject to the same algebraic
equality~\eqref{eq:ct_lifting_equality}.
\subsection{Post-Synthesis SOS Verification}
\label{app:post_synthesis_verification}
After synthesis, the resulting certificate polynomials are fixed
and independently re-verified using the representation
in~\eqref{eq:putinar}. For \gls{ldbb}, the safety and invariance
conditions in~\eqref{eq:sos_lifted_safety} and
\eqref{eq:sos_lifted_invariance} are checked on the exact lifted
polynomial model, while input admissibility is checked directly
on $\hat{\mathcal C}\cap\hat{\mathcal K}^{+}$.
For \gls{pwbb}, the corresponding checks use the certified
piecewise-polynomial model; invariance in~\eqref{eq:pwbb_C3}
is checked in every sector at every vertex of $\Delta_q$.
Because a low-degree relaxation may fail to certify a
valid inequality, the multiplier degree is progressively increased until verification succeeds. All reported certificates were successfully verified at degree \(12\).

\subsection{Nominal controller}
\label{app:nominal_control}
The closed-loop experiments use goal-tracking
controllers to generate \(u_{\mathrm{nom}}\). For \gls{CT},
position error is mapped to desired forward speed and heading
rate, which are tracked by proportional feedback. For \gls{PM},
a cascaded position--attitude controller maps position error to
desired acceleration and subsequently to attitude and thrust
commands with gravity compensation. Controller parameters are fixed across methods for each system and provided in the supplementary material.

% ==========================================================
\subsection{Evaluation Metrics}
\label{app:evaluation_metrics}
\paragraph{Certified coverage}
\label{app:cert_cov_met}
Let
$$
    \mathcal X_{\mathrm{test}}
    :=
    \bigl\{x^{(i)}\in\mathcal D\bigr\}_{i=1}^{N},
$$
where $N=10{,}000$ is the number of common physical test states sampled uniformly
from $\mathcal D$. Let $\{B_j\}_{j=1}^{n_B}$ denote the synthesized
barriers in the original coordinates; for the lifted methods,
$B_j(x):=\hat B_j(\phi(x))$ with $\phi$ given
by~\eqref{eq:ct_lifting_map} and~\eqref{eq:pm_lifting_map}. The
estimated certified coverage over $\mathcal X_{\mathrm{test}}$
is
$$
    \operatorname{Cov}
    :=
    \frac{1}{N}
    \sum_{i=1}^{N}
    \mathbbm{1}
    \left[
        \max_{j=1,\ldots,n_B}
        B_j\bigl(x^{(i)}\bigr)
        \geq 0
    \right].
$$
For \gls{pwbb} and \gls{ldbb}, $n_B=1$; for \gls{pwsb} and
\gls{ldsb}, $n_B$ is the number of barriers in the synthesized
bank.

\paragraph{Closed-loop metrics}
\label{app:closed_loop_metrics}
Consider a trajectory containing $N_{\mathrm r}$
zero-order-hold intervals with sampling instants
$t_k=kT_s$. Let $p_k:=p(t_k)$ for
$k=0,\ldots,N_{\mathrm r}$, and let $p_{\mathrm g}$ denote the
goal position. The accumulated intervention cost and goal-tracking cost are
$$
    J_{\mathrm{int},2}
    :=
    T_s
    \sum_{k=0}^{N_{\mathrm r}-1}
    \left\lVert
        u_k^\star-u_{\mathrm{nom},k}
    \right\rVert_2^2 ,
$$
$$
    J_g
    :=
    \frac{T_s}{2}
    \sum_{k=0}^{N_{\mathrm r}-1}
    \Bigl(
        \left\lVert p_k-p_{\mathrm g}\right\rVert_2
        +
        \left\lVert p_{k+1}-p_{\mathrm g}\right\rVert_2
    \Bigr).
$$

Finally, let $t_\ell^{\mathrm{RK}}$,
$\ell=0,\ldots,N_{\mathrm{RK}}$, denote the dense RK4
integration grid. Since the unsafe set
in~\eqref{eq:case_study_unsafe_set} is a disk of radius $r_o$
centered at the origin, the minimum obstacle clearance is
$$
    d_{\min}
    :=
    \min_{\ell=0,\ldots,N_{\mathrm{RK}}}
    \left(
        \bigl\lVert
            p(t_\ell^{\mathrm{RK}})
        \bigr\rVert_2
        -
        r_o
    \right).
$$

% \FloatBarrier
\bibliographystyle{IEEEtran}
\bibliography{Bibliography}

% \section*{ACKNOWLEDGMENT}
% We  thank  A,  B,  and  C.  This  work  was  supported  in  part  by  a grant from XYZ.
\end{document}